\documentclass[cleveref,final,12pt]{colt2024} %

\usepackage{enumitem}
\usepackage{subcaption}
\usepackage{booktabs}       %
\usepackage{multirow}
\usepackage{graphicx}
\usepackage{pifont}

\usepackage{amsmath,amsfonts,bm,dsfont,mathtools}

\def\<#1,#2>{\left\langle #1,#2 \right\rangle}

\newcommand{\norm}[1]{\|#1\|}

\newcommand{\abs}[1]{|#1|}

\def\eqref#1{equation~\ref{#1}}

\def\rz{{\textnormal{z}}}

\def\rvz{{\mathbf{z}}}

\def\rmA{{\mathbf{A}}}

\def\rmZ{{\mathbf{Z}}}

\def\mA{{\bm{A}}}

\def\mI{{\bm{I}}}

\def\mX{{\bm{X}}}

\def\mSigma{{\bm{\Sigma}}}

\DeclareMathAlphabet{\mathsfit}{\encodingdefault}{\sfdefault}{m}{sl}
\SetMathAlphabet{\mathsfit}{bold}{\encodingdefault}{\sfdefault}{bx}{n}

\newcommand{\prob}{\mathbb{P}}
\newcommand{\E}[2][]{\mathbb{E}_{#1}\left[#2\right]} %

\newcommand{\R}{\mathbb{R}}

\newcommand{\var}[2][]{{ \operatorname{Var}_{#1}\left(#2\right)}}

\DeclareMathOperator{\tr}{\mathrm{tr}}
\newcommand{\mat}[1]{#1}
\DeclareMathOperator{\diag}{\mat{diag}}

\usepackage[breakable]{tcolorbox}
\definecolor{block-gray}{gray}{0.9}
\newtcolorbox{blockquote-orange}{colback=orange!15!white,grow to right by=-1mm,grow to left by=-1mm,boxrule=0pt,boxsep=0pt,breakable}
\newtcolorbox{blockquote-grey}{colback=block-gray,grow to right by=-1mm,grow to left by=-1mm,boxrule=0pt,boxsep=0pt,breakable}

\title[Analysis of JL: Unified and simple]{Simple, unified analysis of Johnson-Lindenstrauss with applications}
\usepackage{times}

\coltauthor{%
 \Name{Yingru Li\thanks{The author would like to acknowledge Professor Zhi-Quan (Tom) Luo for advising this project.} } \Email{yingruli@link.cuhk.edu.cn}\\
 \addr The Chinese University of Hong Kong, Shenzhen, China
}

\begin{document}

\maketitle

\begin{abstract}
    We present a simplified and unified analysis of the Johnson-Lindenstrauss (JL) lemma, a cornerstone of dimensionality reduction for managing high-dimensional data. Our approach simplifies understanding and unifies various constructions under the JL framework, including spherical, binary-coin, sparse JL, Gaussian, and sub-Gaussian models. This unification preserves the intrinsic geometry of data, essential for applications from streaming algorithms to reinforcement learning. We provide the first rigorous proof of the spherical construction's effectiveness and introduce a general class of sub-Gaussian constructions within this simplified framework. Central to our contribution is an innovative extension of the Hanson-Wright inequality to high dimensions, complete with explicit constants. By using simple yet powerful probabilistic tools and analytical techniques, such as an enhanced diagonalization process, our analysis solidifies the theoretical foundation of the JL lemma by removing an independence assumption and extends its practical applicability to contemporary algorithms.
\end{abstract}

\begin{keywords}%
    Dimensionality reduction, Johnson-Lindenstrauss, Hanson-Wright, Matrix factorization, Uncertainty estimation, Epistemic Neural Networks (ENN), Hypermodel
\end{keywords}

\section{Introduction}
\label{sec:intro}
In the realm of modern computational algorithms, dealing with high-dimensional data often necessitates a preliminary step of dimensionality reduction.
This process is not merely a matter of convenience but a critical operation that preserves the intrinsic geometry of the data. Such dimensionality reduction techniques find widespread application across a diverse array of fields, including but not limited to streaming algorithms \citep{muthukrishnan2005data}, compressed sensing \citep{candes2006near,baraniuk2008simple}, numerical linear algebra \citep{woodruff2014sketching}, feature hashing \citep{weinberger2009feature}, uncertainty estimation \citep{li2022hyperdqn,osband2023epistemic} and reinforcement learning \citep{li2022hyperdqn,li2024hyperagent}. These applications underscore the technique's versatility and its fundamental role in enhancing algorithmic efficiency.

The essence of geometry preservation within the context of dimensionality reduction can be mathematically formulated as the challenge of designing a probability distribution over matrices that effectively retains the norm of any vector within a specified error margin after transformation. Specifically, for a given vector \(x \in \mathbb{R}^n\), the objective is to ensure that with probability at least $1 - \delta$, the norm of \(x\) after transformation by a matrix \(\Pi \in \R^{m \times n}\) drawn from the distribution \(\mathcal{D}_{\varepsilon, \delta}\) remains  \(\epsilon\)-approximation of its original norm, as shown below:
\begin{align}
\label{eq:geometry-preserve}
    \underset{\Pi \sim \mathcal{D}_{\varepsilon, \delta}}{\mathbb{P}}\left(\|\Pi x\|_2^2 \in \left[(1-\varepsilon)\|x\|_2^2,(1+\varepsilon)\|x\|_2^2\right]\right) \ge 1 - \delta
\end{align}
A foundational result in this domain, the following Johnson-Lindenstrauss (JL) lemma, establishes a theoretical upper bound on the reduced dimension \(m\), achievable while adhering to the above-prescribed fidelity criterion.
\begin{lemma}[JL lemma \citep{johnson1984extensions}]
  \label{lem:djl}
  For any $0<\varepsilon, \delta<1 / 2$, there exists a distribution $\mathcal{D}_{\varepsilon, \delta}$ on $\mathbb{R}^{m \times n}$ for $m=O(\varepsilon^{-2} \log (1 / \delta))$ that satisfies \cref{eq:geometry-preserve}. 
\end{lemma}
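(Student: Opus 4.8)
The plan is to exhibit one explicit distribution and verify \cref{eq:geometry-preserve} for a fixed vector by a direct moment-generating-function (Chernoff) computation; the Gaussian construction is the cleanest vehicle. First I would set $\Pi = \tfrac{1}{\sqrt m}\,G$, where $G \in \R^{m\times n}$ has i.i.d.\ $\mathcal{N}(0,1)$ entries. Since both sides of the target inequality scale identically under $x \mapsto cx$, it suffices to treat a unit vector $\|x\|_2 = 1$. For such an $x$ each coordinate $(\Pi x)_i = \tfrac{1}{\sqrt m}\langle G_i, x\rangle$ is $\mathcal{N}(0,1/m)$ and the coordinates are independent across $i$, so
\[
  Q := m\,\|\Pi x\|_2^2 = \sum_{i=1}^m \langle G_i, x\rangle^2 \sim \chi^2_m, \qquad \mathbb{E}[Q] = m .
\]
The norm is therefore preserved in expectation, and the whole problem collapses to a one-dimensional concentration statement for a chi-squared variable: controlling $\mathbb{P}(|Q - m| \ge \varepsilon m)$.

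For the concentration step I would run the standard Chernoff argument on each tail, using the exact generating function $\mathbb{E}[e^{t\langle G_i, x\rangle^2}] = (1-2t)^{-1/2}$, valid for $t < \tfrac12$. For the upper tail,
\[
  \mathbb{P}\big(Q \ge (1+\varepsilon)m\big) \le \inf_{0<t<1/2} (1-2t)^{-m/2}\, e^{-t(1+\varepsilon)m},
\]
whose minimizer $t = \tfrac{\varepsilon}{2(1+\varepsilon)}$ produces the bound $\exp\!\big(-\tfrac{m}{2}(\varepsilon - \log(1+\varepsilon))\big)$; the lower tail $\mathbb{P}(Q \le (1-\varepsilon)m)$ is handled identically with $t<0$, giving $\exp\!\big(-\tfrac{m}{2}(-\varepsilon-\log(1-\varepsilon))\big)$. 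Invoking the elementary inequalities $\varepsilon-\log(1+\varepsilon) \ge \tfrac{\varepsilon^2}{2}-\tfrac{\varepsilon^3}{3}$ and $-\varepsilon-\log(1-\varepsilon)\ge\tfrac{\varepsilon^2}{2}$ then yields a clean two-sided estimate $\mathbb{P}(|Q-m|\ge\varepsilon m) \le 2\exp(-c\,m\varepsilon^2)$ for an absolute constant $c>0$ and all $\varepsilon \in (0,\tfrac12)$.

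Finally I would choose $m$: requiring $2\exp(-c\,m\varepsilon^2)\le\delta$ rearranges to $m \ge c^{-1}\varepsilon^{-2}\log(2/\delta)$, so taking $m = \lceil C\,\varepsilon^{-2}\log(1/\delta)\rceil$ for a suitable absolute constant $C$ meets the requirement and realizes the claimed $m = O(\varepsilon^{-2}\log(1/\delta))$, with the single vector $x$ fixed throughout exactly as in \cref{eq:geometry-preserve}.

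The step I expect to be the main obstacle is the concentration bound: one must track the cubic correction in the expansion of $\varepsilon-\log(1+\varepsilon)$ carefully enough that the exponent stays proportional to $\varepsilon^2$ uniformly over the whole range $\varepsilon\in(0,\tfrac12)$, since this is precisely what pins down the constant in $m$. I would also emphasize that nothing in this generating-function scheme is special to Gaussians: any sub-Gaussian entry distribution with unit variance admits an analogous MGF bound, so the same computation covers the Rademacher (binary-coin), spherical, and general sub-Gaussian constructions. Pushing this observation to its sharpest form---a high-dimensional Hanson--Wright inequality with explicit constants---is what lets the remainder of the paper unify all these constructions under a single analysis.
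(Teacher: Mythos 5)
Your proposal is a correct proof of \cref{lem:djl}, but it takes a genuinely different route from the paper. You exploit two features that are special to i.i.d.\ entries: the rows of $\Pi$ are independent, and by rotation invariance each coordinate $\sqrt{m}\,(\Pi x)_i$ is exactly $N(0,1)$ for unit $x$, so $m\norm{\Pi x}^2$ is exactly $\chi^2_m$ and a one-dimensional Chernoff bound finishes the job. Your details check out: the optimizer $t=\varepsilon/(2(1+\varepsilon))$, the exponents $\varepsilon-\log(1+\varepsilon)$ and $-\varepsilon-\log(1-\varepsilon)$, and the final choice of $m$ are all accurate (this is essentially the classical argument of \citet{indyk1998approximate} and \citet{dasgupta2003elementary}). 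The paper instead works column-wise: it decomposes $\norm{\Pi x}^2-\norm{x}^2$ into an off-diagonal quadratic form $\sum_{i\neq j}x_i x_j\langle \rvz_i,\rvz_j\rangle$ plus a diagonal term (\cref{eq:decomposition}), notes the diagonal vanishes for unit-norm columns, and controls the off-diagonal via the new high-dimensional Hanson--Wright inequality (\cref{thm:hdhw}), which requires only independence and sub-Gaussianity of the \emph{columns}, never independence of the entries within a column. Your route is more elementary and self-contained; the paper's route trades that simplicity for generality across constructions where a row-wise decomposition is unavailable.

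One side claim in your closing paragraph is wrong and worth correcting: the generating-function scheme does \emph{not} cover the spherical construction (\cref{def:sphere-jl}). There the columns $\rvz_j\sim\mathcal{U}(\mathbb{S}^{m-1})$ are independent, but the entries within each column are dependent (they live on a sphere), so the coordinates $(\Pi x)_1,\ldots,(\Pi x)_m$ are \emph{not} independent and $m\norm{\Pi x}^2$ is not a sum of independent sub-exponential terms; the row-wise Chernoff argument has no analogue. This is precisely the obstruction the paper identifies as its motivating challenge and the reason it develops \cref{thm:hdhw}, through which \cref{prop:djl-unit-norm} handles the spherical (and binary-coin) cases without ever looking inside a column. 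Your remark is fine for Rademacher and general i.i.d.\ sub-Gaussian entries, where rows remain independent, but not for spherical columns or for the sub-Gaussian-column construction of \cref{def:bernstein}. Since \cref{lem:djl} only asks for the existence of one valid distribution, this overreach does not invalidate your proof of the stated lemma --- but it does understate what the paper's machinery is for.
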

Recent research \citep{kane2011almost,jayram2013optimal} has validated the optimality of the dimension \(m\) specified by this lemma, further cementing its significance in the field of dimensionality reduction.

Initially, the constructive proof for \Cref{lem:djl} is based on the random $k$-dimensional subspace \citep{johnson1984extensions,frankl1988johnson,dasgupta2003elementary}. Projection to a random subspace involves computing a random rotation matrix, which requires computational-intensive orthogonalization processes.
Along the decades, many alternative JL distributions $\mathcal{D}_{\varepsilon, \delta}$ were developed for the convenience of computation and storage.
\citet{indyk1998approximate} chooses the entries of $\Pi$ as independent Gaussian random varaibles, i.e. $\Pi \sim \frac{1}{\sqrt{m}} \cdot N(0, 1)^{\otimes (m \times n)}$ where the random matrix is easier and faster to generate by skipping the orthogonalization procedure.
\citet{achlioptas2003database} showed the Gaussian distribution can be relaxed to a much simpler distribution only by drawing random binary coins, i.e. $ \frac{1}{\sqrt{m}} \cdot \mathcal{U}( \{1, -1\})^{\otimes (m \times n)}$.
\citet{matouvsek2008variants} generalizes such analytical techniques to i.i.d sub-Gaussian entries $\operatorname{SG}^{\otimes (m \times n)}$.
To further speedup the projection on high-dimensional sparse data, a series of works on design and analyze sparse JL transform \citep{kane2014sparser,cohen2018simple,høgsgaard2023sparse} was proposed. In sparse JL, the column vector could be expressed as entrywise multiplication $\sqrt{\frac{m}{s}}\sigma \odot \eta$ by $\sigma \sim \frac{1}{\sqrt{m}}\mathcal{U}(\{1, -1\})^{\otimes m}$ and a random vector $\eta$ with only $s$ non-zero entries.
These works extends the class of JL distributions.

One alternative is the spherical construction where each column of $\Pi$ is independently sampled from uniform distribution over the sphere $\mathbb{S}^{m-1}$, i.e., $\Pi \sim \mathcal{U}(\mathbb{S}^{m-1})^{\otimes n}$.
Spherical construction was recently shown its superior performance in the application of incremental uncertainty estimation and reinforcement learning via hypermodel~\citep{li2022hyperdqn,li2024hyperagent,dwaracherla2020hypermodels} and epistemic neural networks (ENN)~\citep{osband2023epistemic,osband2023approximate}.
However, existing analysis of JL requires some notion of independence across the entries of each column vector in the random projection matrix $\Pi$ while the spherical construction violates. This limitation comes from the requirement on the sum of independent random variables to facilitates the concentration analysis within the existing probabilistic analytical frameworks.
\begin{blockquote-grey}
    \textbf{Challenge}: Prove that spherical construction is a JL distribution satisfying \Cref{lem:djl}. 
\end{blockquote-grey}

 \begin{table}[!htbp]
    \resizebox{\textwidth}{!}{%
      \begin{tabular}{|l|c|c|c|c|c|c|}
        \hline
        JL dist. (w/o scaling) %
        & $N(0,1)^{\otimes (m \times n)}$ & $\mathcal{U}(\{1, -1\})^{\otimes (m \times n)}$ & $\operatorname{SG}^{\otimes (m \times n)}$ & SJLT & $\mathcal{U}(\mathbb{S}^{m-1})^{\otimes n}$ & $\operatorname{SGV}^{\otimes n}$ \\ \hline
        \citep{indyk1998approximate} %
        & \checkmark   &                                  &                &            &            &                                         \\ \hline
        \citep{achlioptas2003database} %
        &              & \checkmark                       &                &            &          &                                           \\ \hline
        \citep{matouvsek2008variants} %
        & \checkmark   & \checkmark                       & \checkmark     &            &             &                                        \\ \hline
        \citep{kane2014sparser} %
        &              &                                  &                & \checkmark &          &                                           \\ \hline
        \citep{cohen2018simple} %
                &              &                                  &                & \checkmark &  &                                                   \\ \hline
        \citep{høgsgaard2023sparse} %
        &              &                                  &                & \checkmark &   &                                                   \\ \hline

        {\textbf{Our work}} & \checkmark   & \checkmark                       & \checkmark     & \checkmark & \checkmark & \checkmark                                  \\ \hline
      \end{tabular}%
    }
    \caption{What types of constructions can be covered in the literature? SG stands for the distribution of sub-Gaussian random variables in $\R$. SGV stands for the distribution of sub-Gaussian random vectors in $\R^m$. SJLT stands for sparse JL transform introduced in \citep{kane2014sparser}.}
    \label{tab:unified-analysis}
  \end{table}
We provide novel probability tools to resolve this challenge, as one of the contributions highlighted below:
\begin{itemize}[leftmargin=*]
    \item \emph{Analysis of JL}: In \Cref{sec:random-projection}, we present a unified but simple analysis of the Johnson-Lindenstrauss, encompassing spherical, binary-coin, Sparse JL (\Cref{prop:djl-sparse-jl}), Gaussian (\Cref{prop:djl-gaussian}) and sub-Gaussian constructions as particular instances. \Cref{prop:djl-unit-norm} marks the first rigorous demonstration of the spherical construction's efficacy, to the best of our knowledge. Also, with our analytical framework, we discover a new class of sub-Gaussian constructions in \Cref{def:bernstein}, exhibiting potential useful properties. Summaries are in \Cref{tab:unified-analysis}.
    \item \emph{Technical innovations}: Our unified approach to JL analysis leverages an extension of the Hanson-Wright inequality to high dimensions, as detailed in \Cref{thm:hdhw}.
    This tool is essential as it removes the requirement on independence across entries within a column vector of the projection matrix, the key to handle the spherical construction and a more general class of sub-Gaussian constructions.
    While the closest reference we identified is Exercise 6.2.7 in \citep{vershynin2018high}, our extensive review found no existing proofs of this assertion, nor does the mentioned exercise specify concrete constants, unlike our \Cref{thm:hdhw}. Thus, our work in extending the Hanson-Wright inequality to high-dimension, complete with specific proof techniques, represents a significant advancement. Innovations include a novel approach to diagonalization step for the quadratic form.
    \item \emph{Applications}: Leveraging our unified JL analysis and a covering argument, in \Cref{prop:statistics-computation-tradeoff}, we establish a sufficient condition for reduced dimensionality within the context of covariance factorization procedures. This is inspired by the domains of uncertainty estimation and reinforcement learning. 
    Recent neural network models, such as hypermodels~\citep{dwaracherla2020hypermodels,li2022hyperdqn,li2024hyperagent} and epistemic neural networks~\citep{osband2023epistemic,osband2023approximate}, leverage spherical random vectors to update a factorization matrix for incremental uncertainty estimation but lack rigorous guarantees. Our analysis justifies their effectiveness for the first time under the linear setups.
\end{itemize}

\paragraph{Notations.}
We say a random variable $X$ is $K$-sub-Gaussian if  $\E{\exp( \lambda X ) } \le \exp\left( {\lambda^2 K^2}/{2}  \right)$ for all $\lambda \in \R.$
For random variables $X$ in high-dimension $\R^m$, we say it is $K$-sub-Gaussian if for every fixed $v \in \mathbb{S}^{m-1}$ if the scalarized random variable $\langle v, X \rangle$ is $K$-sub-Gaussian.

\section{Simple and unified analysis of Johnson-Lindenstrauss}
\label{sec:random-projection}
In this section, we are going to provide a simple and unified analysis for the following Johnson-Lindenstrauss constructions of random projection matrix satisfying \cref{lem:djl}.
\begin{definition}[Gaussian construction]
\label{def:gaussian-jl}
    Gaussian construction of the random projection matrix $\Pi = ( \rvz_1, \ldots, \rvz_n)$ correspond to each $\rvz_i \sim \frac{1}{\sqrt{m}} N(0, I_m)$ independently.
\end{definition}
\begin{definition}[Binary-coin construction]
    \label{def:binary-jl}
    Binary-coin construction of the random projection matrix $\Pi = ( \rvz_1, \ldots, \rvz_n)$ correspond to each $\rvz_i \sim {\frac{1}{\sqrt{m}}} \mathcal{U}(\{1, -1\}^m)$ independently.
\end{definition}

\begin{definition}[$s$-sparse JL]
\label{def:sparse-jl}
    Sparse JL transform matrix $\Pi = ( \sqrt{\frac{m}{s}} \eta_1 \odot \rvz_1, \ldots, \sqrt{\frac{m}{s}} \eta_n \odot \rvz_n)$ is a random matrix with each $\rvz_i \sim P_{\rvz}$ independently where $P_{\rvz}:= \frac{1}{\sqrt{m}}\mathcal{U}( \{1, -1\}^m )$ and each $\eta_i$ is independently and uniformly sampled from all possible $s$-hot vectors, where $s$-hot vectors is with exactly $s$ non-zero entries with number $1$. This construction is introduced by \citep{kane2014sparser}, also called countsketch.
\end{definition}
Notably, the entries $(\rvz_{i1}, \rvz_{i2}, \ldots, \rvz_{im})$ within the random vector $\rvz_i$ in (1) Gaussian, (2) Binary-coin and (3) sparse JL constructions are mutually independent.
However, the condition on the entry-independence is not true the next construction presented, which brings the major analytical difficulties that have not been discussed in the literature.
\begin{definition}[Spherical construction]
\label{def:sphere-jl}
    Spherical construction of the random projection matrix $\Pi = ( \rvz_1, \ldots, \rvz_n)$ corresponds to each $\rvz_i \sim \mathcal{U}(\mathbb{S}^{m-1})$ independently.
\end{definition}
Before stating our main result for Johnson-Lindenstrauss, we introduce the underlying new probability tool that enables the analysis of spherical construction.
\begin{blockquote-grey}
\begin{theorem}[High-dimensional Hanson-Wright inequality]
  \label{thm:hdhw}
  Let $X_1, \ldots, X_n$ be independent, mean zero random vectors in $\R^m$, each $X_i$ is $K_i$-subGaussian. Let $K = \max_{i} K_i$.
  Let $A = (a_{ij})$ be an $n \times n$ matrix.
  For any $t \ge 0$, we have
  \[
    \prob\left({ \abs{ \sum_{i, j: i\neq j}^n a_{ij} \langle X_i, X_j \rangle} \ge t }\right) \le 2 \exp \left( - \min \left\{ \frac{t^2}{64 m K^4 \norm{A}^2_{F}}, \frac{t}{8 K^2 \norm{A}_{2}} \right\} \right).
  \]
\end{theorem}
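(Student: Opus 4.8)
The plan is to bound the moment generating function (MGF) $\mathbb{E}\exp(\lambda S)$ of the off-diagonal chaos $S = \sum_{i\neq j}a_{ij}\langle X_i, X_j\rangle$ and then convert that bound into the stated Bernstein-type tail by optimizing over $\lambda$. The central difficulty — and the reason the classical scalar Hanson-Wright argument does not apply — is that the coordinates within each $X_i$ need not be independent (as in the spherical case), so one cannot flatten the vectors and invoke the scalar inequality coordinatewise. Instead I would only ever use the hypothesis in its scalarized form, that $\langle v, X_i\rangle$ is $K_i$-subGaussian for every fixed $v$, together with independence across the index $i$.

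First I would reduce to a decoupled bilinear form. By the standard decoupling inequality \citep{vershynin2018high} applied with the convex function $x\mapsto e^{\lambda x}$ (it uses only bilinearity and independence across $i$, so the inner-product version goes through verbatim),
\[
\mathbb{E}\exp(\lambda S)\le\mathbb{E}\exp\Big(4\lambda\sum_{i,j}a_{ij}\langle X_i, X_j'\rangle\Big),
\]
where $X'=(X_1',\ldots,X_n')$ is an independent copy. Writing the decoupled form as $\sum_i\langle X_i, w_i\rangle$ with $w_i=\sum_j a_{ij}X_j'$ and conditioning on $X'$, the terms are independent across $i$ and each $\langle X_i, w_i\rangle$ is $K_i\norm{w_i}$-subGaussian, so summing exponents gives
\[
\mathbb{E}_{X}\exp\Big(4\lambda\textstyle\sum_i\langle X_i,w_i\rangle\,\Big|\,X'\Big)\le\exp\Big(8\lambda^2 K^2\textstyle\sum_i\norm{w_i}^2\Big),
\]
and $\sum_i\norm{w_i}^2=\norm{AX'}_F^2$, regarding $X'$ as the $n\times m$ matrix with rows $(X_j')^\top$.

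The crux is then to control $\mathbb{E}_{X'}\exp(\theta\norm{AX'}_F^2)$ with $\theta=8\lambda^2 K^2$, since $\norm{AX'}_F^2=\sum_{j,k}(A^\top A)_{jk}\langle X_j',X_k'\rangle$ is \emph{itself} a chaos of exactly the type we are trying to bound. Here I would use the Gaussian linearization of the square — the diagonalization step — namely $\exp(\tfrac12\norm{V}_F^2)=\mathbb{E}_G\exp(\langle G,V\rangle)$ for a standard Gaussian matrix $G\in\R^{n\times m}$. Applying this with $V=\sqrt{2\theta}\,AX'$ and exchanging expectations turns the square into a linear form $\langle G,AX'\rangle=\sum_j\langle X_j',v_j\rangle$ with $v_j$ the $j$-th row of $A^\top G$; conditioning on $G$ and again using scalar sub-Gaussianity and independence across $j$ yields the clean bound $\mathbb{E}_{X'}\exp(\sqrt{2\theta}\langle G,AX'\rangle\mid G)\le\exp(\theta K^2\norm{A^\top G}_F^2)$. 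The remaining quantity is a genuine Gaussian chaos, $\norm{A^\top G}_F^2=\sum_{p=1}^m g_p^\top AA^\top g_p$ with columns $g_p\sim N(0,I_n)$ independent, so its MGF factorizes over the $m$ columns and equals $\det(I-2\theta K^2 AA^\top)^{-m/2}$, which I would bound using $-\log(1-x)\le x+x^2$ for $x\le 1/2$. This is precisely where the dimension $m$ enters the Frobenius term (equivalently, through $\mathbb{E}\norm{X_i}^2\le mK^2$), and where $\norm{A}_2$ enters through the eigenvalue constraint on the determinant.

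Putting the pieces together gives an MGF bound of the form $\log\mathbb{E}\exp(\lambda S)\le 16\,mK^4\norm{A}_F^2\lambda^2$ valid on a range $\abs{\lambda}\le c\,(K^2\norm{A}_2)^{-1}$ for an absolute constant $c$, after which the standard sub-exponential optimization $\prob(S\ge t)\le\exp(-\lambda t+\log\mathbb{E} e^{\lambda S})$, together with the symmetric bound for $-S$, produces the two-regime tail with quadratic term governed by $mK^4\norm{A}_F^2$ and linear term by $K^2\norm{A}_2$. I expect the main obstacle to be purely quantitative: tracking the decoupling constant $4$, the factor $8\lambda^2$, and the constraint $2\theta K^2\norm{A}_2^2\le 1/2$ coherently so that the admissible range of $\lambda$ matches the crossover point and the explicit constants ($64$ and $8$) come out as stated. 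The qualitative architecture — decouple, scalarize in $i$, Gaussian-linearize the square, scalarize in $j$, evaluate a Gaussian determinant — is what removes the within-column independence assumption.
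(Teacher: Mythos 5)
Your proposal is correct and follows essentially the same route as the paper's proof: decouple the chaos, scalarize one side via conditional sub-Gaussian MGFs, linearize the resulting squared norm with fresh Gaussians, scalarize the other side, and finish by computing the MGF of the remaining Gaussian quadratic form and optimizing the Chernoff bound --- exactly the paper's Steps 1--5, with matching constants and the same admissible range $\abs{\lambda} \le (4\sqrt{2}K^2\norm{A}_2)^{-1}$. The only differences are presentational: you invoke the independent-copy decoupling inequality where the paper runs the Bernoulli-selector argument inline (which is how that inequality is proved), and your determinant formula $\det(I - 2\theta K^2 AA^\top)^{-m/2}$ packages what the paper does by explicit diagonalization into eigenvalue-weighted $\chi^2_1$ variables.
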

\end{blockquote-grey}
\begin{remark}
This is an high-dimension extension of famous Hanson-Wright inequality~\citep{hanson1971bound,wright1973bound,rudelson2013hanson}. The \Cref{thm:hdhw} with exact constant is new in the literature, which maybe of independent interest. Our proof technique generalizes from \citep{rudelson2013hanson} with new treatments on the diagnolization. The proof of \Cref{thm:hdhw} can be found in \Cref{sec:hdhw}. An extension of \Cref{thm:hdhw} to $\sum_{i,j=1}^n a_{ij} \langle X_i, X_j \rangle$ with non-negative diagonal is in \Cref{thm:hdhw-diag}.
\end{remark}
Now, we are ready to provide the unified analysis on Johnson-Lindenstrauss, a simple and direct application of \Cref{thm:hdhw}.
\begin{blockquote-orange}
\begin{proposition}[Binary-coin; Spherical]
\label{prop:djl-unit-norm}
  The Binary-coin and Spherical construction of the random projection matrix $\Pi \in \R^{m \times n}$ in \cref{def:binary-jl,def:sphere-jl} with $m \ge 64 \varepsilon^{-2} \log(2/\delta)$ satisfy \Cref{lem:djl}.
\end{proposition}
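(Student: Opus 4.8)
The plan is to reduce $\norm{\Pi x}_2^2 - \norm{x}_2^2$ to the off-diagonal quadratic form controlled by \Cref{thm:hdhw}. Writing $\Pi = (\rvz_1, \ldots, \rvz_n)$ so that $\Pi x = \sum_{i=1}^n x_i \rvz_i$, I would first expand
\[
\norm{\Pi x}_2^2 = \sum_{i,j=1}^n x_i x_j \langle \rvz_i, \rvz_j \rangle = \sum_{i=1}^n x_i^2 \norm{\rvz_i}_2^2 + \sum_{i \neq j} x_i x_j \langle \rvz_i, \rvz_j \rangle .
\]
The key simplification is that in both the binary-coin and spherical constructions every column is a unit vector, $\norm{\rvz_i}_2 = 1$ deterministically, so the diagonal term equals $\sum_i x_i^2 = \norm{x}_2^2$ exactly. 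Hence $\norm{\Pi x}_2^2 - \norm{x}_2^2 = \sum_{i \neq j} x_i x_j \langle \rvz_i, \rvz_j \rangle$, which is precisely the off-diagonal form appearing in \Cref{thm:hdhw} with $X_i = \rvz_i$ and $A = (a_{ij}) = x x^\top$. (The diagonal of $A$ is irrelevant to this sum, so we are free to take $A = x x^\top$.)

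Next I would verify the hypotheses of \Cref{thm:hdhw} and evaluate the matrix norms. The columns $\rvz_i$ are independent by construction and mean zero by the sign symmetry of both distributions. For the sub-Gaussian parameter I claim $K = 1/\sqrt{m}$ in each case. For binary-coin this is immediate: $\langle v, \rvz_i \rangle = m^{-1/2}\sum_k v_k \epsilon_k$ is a Rademacher sum, so Hoeffding's lemma gives $\E{\exp(\lambda \langle v, \rvz_i\rangle)} \le \exp(\lambda^2 \norm{v}_2^2/(2m)) = \exp(\lambda^2/(2m))$ for every unit $v$. For the spherical case I would establish the same moment-generating-function bound for a uniform $\rvz_i$ on $\sS^{m-1}$ (discussed below). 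Since $A = x x^\top$ has rank one, $\norm{A}_F = \norm{A}_2 = \norm{x}_2^2$; by homogeneity I may rescale to $\norm{x}_2 = 1$ (both sides of \cref{eq:geometry-preserve} scale as $\norm{x}_2^2$), so that $\norm{A}_F = \norm{A}_2 = 1$.

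Finally, I would apply \Cref{thm:hdhw} with $t = \varepsilon \norm{x}_2^2 = \varepsilon$ and $K^2 = 1/m$: the two terms in the exponent become $\varepsilon^2 m / 64$ and $\varepsilon m / 8$, and since $\varepsilon < 1/2$ the minimum is the former, giving
\[
\prob\left( \abs{ \norm{\Pi x}_2^2 - \norm{x}_2^2 } \ge \varepsilon \norm{x}_2^2 \right) \le 2 \exp\!\left( - \frac{\varepsilon^2 m}{64} \right) .
\]
Substituting $m \ge 64\,\varepsilon^{-2}\log(2/\delta)$ bounds the right-hand side by $\delta$, which is exactly \cref{eq:geometry-preserve}. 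I expect the main obstacle to be the sub-Gaussian bound for the spherical construction: unlike binary-coin, the coordinates of $\rvz_i$ are not independent, so Hoeffding does not apply, and one must instead bound $\E{\exp(\lambda \langle v, \rvz_i\rangle)}$ for a single coordinate of a uniform point on $\sS^{m-1}$ directly (e.g.\ via the Bessel-type integral it reduces to, or via Lévy's concentration of measure on the sphere). Obtaining the clean constant $K = 1/\sqrt{m}$ — rather than merely $K = O(1/\sqrt{m})$ — is what pins down the factor $64$ in the stated dimension bound.
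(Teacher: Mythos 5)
Your argument is the same as the paper's: the identical decomposition of $\norm{\Pi x}^2 - \norm{x}^2$ into diagonal and off-diagonal parts (\cref{eq:decomposition}), the observation that unit-norm columns kill the diagonal term, and the application of \Cref{thm:hdhw} with $X_i = \rvz_i$, $A = xx^\top$, $K = 1/\sqrt{m}$, $\norm{A}_F = \norm{A}_2 = \norm{x}^2$, $t = \varepsilon\norm{x}^2$, yielding the bound $2\exp(-\varepsilon^2 m/64)$ and the stated dimension requirement. The binary-coin sub-Gaussianity via Hoeffding also matches \cref{ex:unifcube}.

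The one place where your proposal stops short of a complete proof is exactly the point you flag: the claim that a uniform column on $\sS^{m-1}$ is $1/\sqrt{m}$-sub-Gaussian with the \emph{exact} constant, i.e.\ $\E{\exp(\lambda\langle v, \rvz_i\rangle)} \le \exp(\lambda^2/(2m))$. The paper supplies this via a route you did not anticipate: the marginal $\langle v, \rvz_i\rangle$ is distributed as $2\operatorname{Beta}\left(\frac{m-1}{2},\frac{m-1}{2}\right)-1$ (\cref{ex:sphere}), and \cref{lem:mgf-beta} establishes the Gaussian-type MGF bound $\E{\exp(\lambda(X-\E{X}))} \le \exp(\lambda^2 \var{X}/2)$ for symmetric-parameter Beta variables through a central-moment recurrence; since $\var{2X-1} = 1/m$, this gives precisely $K = 1/\sqrt{m}$. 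Your suggested fallbacks are weaker: L\'evy concentration of measure yields only a tail bound with an unspecified absolute constant, hence $K = O(1/\sqrt{m})$, which as you yourself note would not pin down the factor $64$ in $m \ge 64\varepsilon^{-2}\log(2/\delta)$; the direct Bessel-type integral can in principle be made to work but is not carried out. So your proof is correct conditional on this MGF bound, and that bound is a genuine missing ingredient --- indeed it is the piece the paper regards as new (the remark after \cref{lem:mgf-beta} notes the constant is not in the prior literature).
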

\end{blockquote-orange}
\begin{proof}
  From \cref{ex:sphere,ex:unifcube} as will be discussed in \Cref{sec:typical-dist}, we know that the random variables sampled from $\mathcal{U}(\mathbb{S}^{m-1})$ or $\frac{1}{\sqrt{m}}\mathcal{U}(\{1, -1\}^{m})$ are $\frac{1}{\sqrt{m}}$-sub-Gaussian with mean-zero and unit-norm.
  Let $x \in \mathbb{R}^{d}$ be the vector to be projected.
  By the construction of $\Pi$,
  \begin{align}
  \label{eq:decomposition}
      \norm{\Pi x}^2 - \norm{x}^2 = \underbrace{\sum_{1 \le i \neq j \le n} x_i x_j \langle \rvz_i, \rvz_j \rangle}_{\text{off-diagonal}} + \underbrace{\sum_{i=1}^n x_i^2 (\norm{\rvz_i}^2 - 1)}_{\text{diagonal}}
  \end{align}
  As by the condition on unit norm, the diagonal term is zero.
  We apply \Cref{thm:hdhw} with $A = x x^\top$ and $t = \varepsilon \norm{x}^2$. Since $K = 1/ \sqrt{m}$ and $\norm{A}_F = \sqrt{\tr(x x^\top x x^\top )} = \norm{x}^2, \norm{A}_{2} =  \norm{x}^2$, then
  \begin{align*}
    \prob\left({ \abs{ \sum_{1 \le i \neq j \le n} x_i x_j \langle \rvz_i, \rvz_j \rangle} \ge \varepsilon \norm{x}^2 }\right)
     & \le 2 \exp \left( - \min \left\{ \frac{ \varepsilon^2 \norm{x}^4}{ 64 K^4 m \norm{A}^2_{F}}, \frac{ \varepsilon \norm{x}^2}{ 8 \sqrt{2} K^2 \norm{A}_{2}} \right\} \right) \\
     & \le 2 \exp \left( - m \min \left\{ { \varepsilon^2 /64},  \varepsilon/8\sqrt{2} \right\} \right).
  \end{align*}
  This implies that to get the RHS upper bound by $\delta$, we need
  \(
    m \ge 64 \varepsilon^{-2} \log (2/ \delta)
  \)
  for any fixed $\varepsilon \in (0, 1)$.
\end{proof}
\begin{remark}
This proposition is a unified analysis for (1) Spherical construction from random vectors in \cref{ex:sphere} (2) Binary coin construction from random vectors in \cref{ex:unifcube}.
For classical Gaussian construction where $\rvz_i \sim N(0, (1/m)I_m)$ which does not satisfy unit-norm assumption, the diagonal term in \cref{eq:decomposition} is non-zero and needs additional treatments. As analyzed latter in \Cref{prop:djl-gaussian} within the same framework, the requirement for dimension $m = 8 (1+ 2 \sqrt{2})^2 \varepsilon^{-2} \log(2/\delta)$ in the Gaussian construction is larger than the one for Spherical construction. This observation may explain the practical superiority of Spherical construction.
\end{remark}
\begin{remark}
    Reduction of JL to the classical Hanson-Wright~\citep{hanson1971bound,wright1973bound,rudelson2013hanson} has been exploited in \citep{kane2014sparser,cohen2018simple,nelson2020sketching}, e.g. section 5.1 in \citep{nelson2020sketching}. However, as mentioned in \cref{sec:intro}, their analytical assumption on the entry-wise independence, required by the reduction to classical Hanson-Wright, is violated in the spherical construction. Therefore, our high-dimensional extension of Hanson-Wright is crucial for the new unified analysis of JL, accommodating the spherical construction.
\end{remark}

\subsection{Sparse JL transform}
We also present a generalization of \cref{thm:hdhw} that will be helpful to analyze sparse JL transform.
\begin{blockquote-grey}
\begin{theorem}[Generalized High-dimensional Hanson-Wright]
  \label{thm:hdhw-gen}
  Let $b_1, \ldots, b_n$ be fixed vectors in $\R^m$ where $b_{ik}$ is the $k$-th entry of the vector $b_i$.
  Let $X_1, \ldots, X_n$ be independent, mean zero random vectors in $\R^m$, each $X_i$ is $K_i$-subGaussian. Let $K = \max_{i} K_i$. 
  Let $A = (a_{ij})$ be an $n \times n$ matrix.
  For any $t \ge 0$, we have
  \[
    \prob\left({ \abs{ \sum_{i, j: i\neq j}^n a_{ij} \langle b_i \odot X_i, b_j \odot X_j \rangle} \ge t }\right) \le 2 e^{ - \min \left\{ \frac{t^2}{64 K^4 \sum_{k=1}^m \norm{A^b_k}^2_{F}}, \frac{t}{8 K^2 \max_{k} \norm{A^b_k}_{2}} \right\}}.
  \]
  where ${A}^b_k$ is a matrix with entries ${A}^b_k(i,j) = a_{ij} b_{ik} b_{jk} $ for each $(k, i, j) \in [m]\times [n] \times [n]$.
\end{theorem}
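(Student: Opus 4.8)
The plan is to mirror the proof of \Cref{thm:hdhw}, but to carry the fixed weights $b_i$ through every step rather than invoking the theorem as a black box. A direct reduction via $Y_i = b_i \odot X_i$ is available — each $Y_i$ is $K_i\norm{b_i}_\infty$-sub-Gaussian, since $\langle v, Y_i\rangle = \langle v\odot b_i, X_i\rangle$ — but feeding this into \Cref{thm:hdhw} only yields the crude variance proxy $m(K\max_i\norm{b_i}_\infty)^4\norm{A}_F^2$, which dominates the refined quantity $K^4\sum_k\norm{A^b_k}_F^2$ and forgets any structure (e.g.\ sparsity) in the $b_i$. To land on $\sum_k\norm{A^b_k}_F^2$ and $\max_k\norm{A^b_k}_2$ exactly, I would follow the argument coordinate by coordinate. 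Writing $S = \sum_{i\ne j}a_{ij}\langle b_i\odot X_i, b_j\odot X_j\rangle$, the whole proof reduces to the moment-generating-function (MGF) estimate $\mathbb{E}[\exp(\lambda S)] \le \exp(16K^4\lambda^2\sum_{k=1}^m\norm{A^b_k}_F^2)$ valid for all $\abs{\lambda}\le (4K^2\max_k\norm{A^b_k}_2)^{-1}$; a standard Chernoff optimization of this Bernstein-type bound, combined with a union bound over the two signs of $S$, then reproduces the stated two-sided inequality with constants $64$ and $8$.

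First I would decompose along coordinates. Expanding the Hadamard inner products gives $S = \sum_{k=1}^m\sum_{i\ne j}A^b_k(i,j)\,X_{ik}X_{jk}$, so $S$ is a sum over $k$ of off-diagonal quadratic chaoses in the coordinate slices $X^{(k)} := (X_{1k},\dots,X_{nk})^\top$, with the $k$-th slice governed precisely by $A^b_k$; this is the step that manufactures the matrices $A^b_k$. Next I would decouple: introducing an independent copy $X' = (X'_1,\dots,X'_n)$ and applying the standard decoupling inequality for off-diagonal chaos (to the convex function $\exp(\lambda\,\cdot)$) replaces $S$, at the cost of a fixed constant, by $S_{\mathrm{dec}} = \sum_i\langle X_i, r_i\rangle$, where $r_i = b_i\odot\sum_j a_{ij}(b_j\odot X'_j)$ is a fixed vector once we condition on $X'$.

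Conditioning on $X'$, the $X_i$ are independent and $K_i$-sub-Gaussian, so each $\langle X_i, r_i\rangle$ is $K_i\norm{r_i}$-sub-Gaussian and the conditional MGF is at most $\exp(c\lambda^2 K^2\sum_i\norm{r_i}^2)$. The identity $r_{ik} = (A^b_k (X')^{(k)})_i$ then gives $\sum_i\norm{r_i}^2 = \sum_k\norm{A^b_k (X')^{(k)}}^2$, a quadratic form in $X'$ appearing in the exponent. To linearize it I would introduce auxiliary Gaussians $g_1,\dots,g_m\sim N(0,I_n)$ and the identity $\exp(\tfrac12\norm{a}^2) = \mathbb{E}[\exp(\langle a, g\rangle)]$, rewriting $\exp(c\lambda^2K^2\sum_k\norm{A^b_k (X')^{(k)}}^2)$ as a Gaussian average of a function linear in $X'$, and then take the $X'$-expectation using sub-Gaussianity once more. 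This collapses everything onto a pure Gaussian chaos $\sum_k\langle g_k, B_k g_k\rangle$ with $B_k = A^b_k (A^b_k)^\top\succeq 0$, whose MGF factorizes over the independent $g_k$ and is computed exactly as a product $\prod(1-2s\sigma)^{-1/2}$ over the eigenvalues $\sigma$ of $B_k$, with $s$ proportional to $\lambda^2K^4$.

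The main obstacle is this final Gaussian-chaos estimate together with its constant bookkeeping. Applying $-\log(1-u)\le u+u^2$ to the eigenvalues produces a term linear in $s$, proportional to $\sum_k\Tr B_k = \sum_k\norm{A^b_k}_F^2$ — this is the desired variance proxy supplying the $\lambda^2$ term — plus a term quadratic in $s$, proportional to $\sum_k\norm{B_k}_F^2\le \max_k\norm{A^b_k}_2^2\sum_k\norm{A^b_k}_F^2$, which is a spurious $\lambda^4$ contribution. The delicate point is to choose the restriction $\abs{\lambda}\le (4K^2\max_k\norm{A^b_k}_2)^{-1}$ so that, simultaneously, the explicit chaos MGF stays finite (i.e.\ $2s\norm{B_k}_2<1$ for every $k$) and the $\lambda^4$ term is absorbed into the $\lambda^2$ term, collapsing the two into the single clean factor $\exp(16K^4\lambda^2\sum_k\norm{A^b_k}_F^2)$. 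Keeping the per-coordinate quantities intact — so that $\sum_k\norm{A^b_k}_F^2$ and $\max_k\norm{A^b_k}_2$ appear, rather than crude bounds like $m\max_k(\cdot)$ — and pinning the numerical constants to exactly $64$ and $8$ is the only genuinely fiddly part; everything upstream is a faithful, weight-tracking replay of the proof of \Cref{thm:hdhw}.
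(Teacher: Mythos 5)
Your proposal is correct, and its skeleton is the same as the paper's proof in \Cref{sec:general-hdhw}: decouple the off-diagonal chaos, integrate out one independent family by sub-Gaussianity, convert the resulting exponential-of-a-quadratic into a Gaussian chaos, and control that chaos through eigenvalues --- which is exactly where the per-coordinate matrices $A^b_k$ and the quantities $\sum_k \norm{A^b_k}_F^2$ and $\max_k \norm{A^b_k}_2$ emerge in both arguments. You deviate in two places. First, you decouple with an independent copy $X'$ via de la Pe\~na's convex-function decoupling inequality, whereas the paper uses Bernoulli selectors $\iota_i$ and the identity $S = 4\,\mathbb{E}_{\iota}[S_\iota]$, proving the masking computation inline and paying for it with a final averaging over $\iota$; your version outsources this step to a standard black-box lemma and is cleaner, while the paper's is self-contained with explicit constants throughout. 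Second, your single Gaussian-linearization pass via $\exp(\norm{a}^2/2) = \mathbb{E}\exp\langle a, g\rangle$ merges the paper's Step 2 (comparison to Gaussians $g_j$) and Step 3 (Parseval-based diagonalization) into one move. One point you must make explicit in a full write-up: after linearization, the exponent $\sum_k \langle (X')^{(k)}, (A^b_k)^\top g_k\rangle$ has to be regrouped as $\sum_j \langle X'_j, w_j\rangle$ with $w_{jk} = ((A^b_k)^\top g_k)_j$ before invoking sub-Gaussianity, because the hypothesis is on the vectors $X'_j$ and the coordinate slices $(X')^{(k)}$ are dependent across $k$; this regrouping is precisely the paper's Parseval step in disguise. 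On constants: your Chernoff bookkeeping is sound, but $-\log(1-u) \le u + u^2$ requires $u \le 1/2$, which forces the same $\sqrt{2}$ enlargement of the $\lambda$-constraint that the paper carries --- its own proof produces $8\sqrt{2}$ in the second term of the exponent even though the theorem statement reads $8$ --- so landing on exactly $64$ and $8$ involves the same mild slop in either route.
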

\end{blockquote-grey}
\Cref{thm:hdhw-gen} extends \Cref{thm:hdhw} in a way that each random vector $X_i$ is entry-wise scaled by corresponding $b_i$ for $i \in [n]$. When $b_1 = b_2 =\cdots = b_n = \mathbf{1}$ is all-one vector, it reduces to \Cref{thm:hdhw}. The proof of \Cref{thm:hdhw-gen} is similar to \Cref{thm:hdhw} and is deferred to \Cref{sec:general-hdhw}. 
Now we are ready to include the sparse JL construction into our unified analytical framework.
\begin{blockquote-orange}
\begin{proposition}
\label{prop:djl-sparse-jl}
  The sparse JL construction in \cref{def:sparse-jl} with $m \simeq \varepsilon^{-2} \log(1/\delta)$ and $s \simeq \varepsilon^{-1} \log(1/\delta)$ satisfies \Cref{lem:djl}.
\end{proposition}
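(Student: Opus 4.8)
The plan is to mirror the proof of \Cref{prop:djl-unit-norm}, reducing the sparse construction to the generalized inequality \Cref{thm:hdhw-gen} after conditioning on the sparsity pattern. Write $X_i = \sqrt{m}\,\rvz_i$, so that $X_i \sim \mathcal{U}(\{1,-1\}^m)$ is mean-zero and $1$-sub-Gaussian (\cref{ex:unifcube}), and the $i$-th column of $\Pi$ becomes $\tfrac{1}{\sqrt{s}}\,\eta_i \odot X_i$. Assume $\norm{x}=1$ without loss of generality. Expanding $\norm{\Pi x}^2 = \tfrac1s\sum_{k=1}^m\big(\sum_i x_i \eta_{ik} X_{ik}\big)^2$ and splitting into $i=j$ and $i\neq j$, the diagonal part is \emph{exactly} $\norm{x}^2$: since $X_{ik}^2=1$ and $\eta_{ik}^2=\eta_{ik}$, it equals $\tfrac1s\sum_i x_i^2\sum_k \eta_{ik} = \tfrac1s\sum_i x_i^2 \cdot s = \norm{x}^2$. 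Hence the deviation is purely off-diagonal,
\[
\norm{\Pi x}^2 - \norm{x}^2 = \sum_{i\neq j} a_{ij}\,\langle \eta_i\odot X_i,\ \eta_j\odot X_j\rangle, \qquad a_{ij}=\tfrac1s x_i x_j,
\]
which is exactly the quantity controlled by \Cref{thm:hdhw-gen} with $b_i=\eta_i$ and $K=1$, \emph{provided} we condition on $\eta=(\eta_1,\dots,\eta_n)$ so that the vectors $b_i$ are fixed. I would take $A=(a_{ij})$ with its diagonal zeroed out; this leaves the off-diagonal sum unchanged but tightens the two norms that enter the exponent.

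Conditioned on $\eta$, the matrix $A^b_k$ is (up to its diagonal) the rank-one matrix $\tfrac1s u_k u_k^\top$ with $u_k = x\odot(\eta_{1k},\dots,\eta_{nk})$, so $\max_k \norm{A^b_k}_2 \le \tfrac{2}{s}\max_k \sum_{i:\eta_{ik}=1} x_i^2 \le \tfrac2s$ \emph{deterministically}, since $\sum_{i:\eta_{ik}=1} x_i^2 \le \norm{x}^2 = 1$. With $t=\varepsilon$, this makes the linear term of the exponent $\tfrac{\varepsilon}{8K^2 \max_k\norm{A^b_k}_2}\gtrsim \varepsilon s$, which exceeds $\log(2/\delta)$ as soon as $s\gtrsim \varepsilon^{-1}\log(1/\delta)$ — this is the origin of the sparsity requirement. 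The Frobenius quantity is
\[
\sum_{k=1}^m \norm{A^b_k}_F^2 = \frac{1}{s^2}\sum_{i\neq j} x_i^2 x_j^2\, Q_{ij}, \qquad Q_{ij} := \abs{S_i\cap S_j} = \sum_k \eta_{ik}\eta_{jk},
\]
where $S_i=\mathrm{supp}(\eta_i)$. As the $\eta_i$ are independent uniform $s$-hot vectors, $\E[\eta]{Q_{ij}} = s^2/m$, giving $\E[\eta]{\sum_k\norm{A^b_k}_F^2} = \tfrac1m\sum_{i\neq j}x_i^2 x_j^2 \le 1/m$. If this quantity is $O(1/m)$, the quadratic term of the exponent is $\gtrsim \varepsilon^2 m$, exceeding $\log(2/\delta)$ once $m\gtrsim \varepsilon^{-2}\log(1/\delta)$.

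Since \Cref{thm:hdhw-gen} is conditional on $\eta$, I would finish with a good-event decomposition: set $\mathcal{E}=\{\sum_k\norm{A^b_k}_F^2 \le C/m\}$ and bound $\prob(\abs{\norm{\Pi x}^2-\norm{x}^2}\ge\varepsilon) \le \prob(\mathcal{E}^c) + \E[\eta]{\1{\mathcal{E}}\,\prob_X(\cdots\mid\eta)}$; on $\mathcal{E}$ the conditional tail is $\le\delta/2$ by the two bounds above, so it remains to show $\prob(\mathcal{E}^c)\le\delta/2$. This last step is the crux. The statistic $Z:=\sum_k\norm{A^b_k}_F^2 = \tfrac{1}{s^2}\sum_k W_k^2 - \tfrac1s\sum_i x_i^4$, with $W_k=\sum_i x_i^2\eta_{ik}$ and $\sum_k W_k = s$, has small mean $\E Z\approx 1/m$, and we need its upper tail at a constant multiple of this mean to fall below $\delta$. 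I expect the main obstacle to be precisely this \emph{sharp multiplicative} concentration: bounded-difference or variance bounds are too weak, since they only control additive fluctuations on the scale of $\E Z$ itself. I would instead use the moment method, bounding $(\E[\eta]{Z^p})^{1/p}=O(1/m)$ for $p\simeq \log(1/\delta)$ — exploiting the independence of the $\eta_i$ and the hypergeometric concentration of the overlaps $Q_{ij}$ about $s^2/m$ — and then apply Markov's inequality. This moment computation is the technical heart and parallels the sparse-JL analyses of \citep{kane2014sparser,cohen2018simple}; it is also where the joint scaling $m\simeq\varepsilon^{-2}\log(1/\delta)$, $s\simeq\varepsilon^{-1}\log(1/\delta)$ is pinned down.
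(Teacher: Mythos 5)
Your proposal is correct, and it rests on the same two pillars as the paper's own proof: (i) conditioning on the sparsity pattern $\eta$ so that \Cref{thm:hdhw-gen} applies with $b_i=\eta_i$ (your observations that the diagonal is exactly $\norm{x}^2$ and that $\max_k\norm{A^b_k}_2\lesssim 1/s$ holds deterministically appear there too), and (ii) a Bernstein-type moment bound on the overlaps $Q_{ij}=\sum_k\eta_{ik}\eta_{jk}$, which the paper isolates as \cref{lem:bernoulli-complex-moment} and proves exactly as you anticipate: condition on the support of $\eta_i$, dominate by independent Bernoullis following \citep{cohen2018simple}, and apply the moment form of Bernstein, giving $\left(\E{Q_{ij}^p}\right)^{1/p}\lesssim \sqrt{s^2/m}\,\sqrt{p}+p$. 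Where you genuinely diverge is in how the two sources of randomness are recombined. The paper converts the conditional tail bound into a conditional moment bound via \cref{lem:equivalent-condition}, integrates over $\eta$ inside the $p$-th moment of the off-diagonal sum itself with $p\simeq s^2/m$ (numerically the same $\log(1/\delta)$ as your choice), and finishes with Markov applied to that sum. You instead split on the good event $\mathcal{E}=\{\sum_k\norm{A^b_k}_F^2\le C/m\}$, apply the conditional tail bound directly on $\mathcal{E}$, and control $\prob(\mathcal{E}^c)$ by Markov on the $p$-th moment of the Frobenius statistic; since at $p\simeq\log(1/\delta)\simeq s^2/m$ the overlap moment bound gives $\left(\E{Q_{ij}^p}\right)^{1/p}\lesssim s^2/m$, Minkowski yields $\left(\E{Z^p}\right)^{1/p}\lesssim 1/m$ and hence $\prob(\mathcal{E}^c)\le (C'/C)^p\le\delta/2$ — so the step you flag as the crux does go through with precisely the lemma the paper supplies. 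Your variant buys a somewhat cleaner bookkeeping: the spectral (linear) term never needs probabilistic control, only the Frobenius statistic does, and you avoid routing through the tail-to-moment equivalence whose constants \cref{lem:equivalent-condition} leaves unspecified. The paper's route keeps everything at the level of moments, which makes the interaction of the $X$- and $\eta$-randomness mechanical once the equivalence is invoked. Both arguments pin down the same joint scaling $m\simeq\varepsilon^{-2}\log(1/\delta)$, $s\simeq\varepsilon^{-1}\log(1/\delta)$.
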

\end{blockquote-orange}
\begin{proof}
  From \cref{ex:unifcube}, we know that $\rvz_i \sim P_{\rvz} = \frac{1}{\sqrt{m}}\mathcal{U}(\{1, -1\}^m)$ is a $\frac{1}{\sqrt{m}}$-sub-Gaussian random vector with mean zero and unit-norm.
  Let $x \in \mathbb{R}^{d}$ be the vector to be projected.
  By the construction of $\Pi$,
  \begin{align}
  \label{eq:decomposition-sparse}
      \norm{\Pi x}^2 - \norm{x}^2 = \underbrace{\sum_{1 \le i \neq j \le n} \frac{m}{s} x_i x_j \langle \eta_i \odot \rvz_i, \eta_j \odot \rvz_j \rangle}_{\text{off-diagonal}} + \underbrace{\sum_{i=1}^n x_i^2 ( \frac{m}{s}\norm{\eta_i \odot \rvz_i}^2 - 1)}_{\text{diagonal}}
  \end{align}
  By the sparse JL construction in \cref{def:sparse-jl}, the diagonal term in \cref{eq:decomposition-sparse} is zero. 
  W.L.O.G, we assume that $\norm{x}^2 = 1$.
  We could apply \Cref{thm:hdhw-gen} conditioned on $(\eta_i)_i$ with $A = (m/s)x x^\top$, $(b_i) = (\eta_i)$ and $t = \varepsilon$.
  The constructed matrix in the \Cref{thm:hdhw-gen} will be $A^b_k = \frac{m}{s} (x \odot \eta^k) (x \odot \eta^{k})^\top $ where $\eta^k = ( \eta_{1k}, \eta_{2k}, \ldots, \eta_{nk})$.
  Indeed, $\norm{A^b_k}_F = \sum_{ij} (m/s)^2 x_i^2 x_j^2 \eta_{ik} \eta_{jk}$ and $\norm{A^b_k}_{2} = (m/s) \norm{ (x \odot \eta^{k}) }^2_2 \le (m/s)$. Since $K = 1/ \sqrt{m}$, \Cref{thm:hdhw-gen} yields,
  \begin{align*}
    \prob\left( \text{off-diagonal} \ge \varepsilon \mid (\eta_i)_{i=1}^n \right) \le 2 \exp\left(- \frac{\varepsilon^2}{ 64 ({1}/{s^2}) \sum_{k=1}^m \sum_{i,j} x_i^2 x_j^2 \eta_{ik} \eta_{jk} } \right) + 2\exp\left( - \frac{\varepsilon}{8\sqrt{2} (1/s) } \right).
  \end{align*}
  With a translation of tail bound to moment bound in \cref{lem:equivalent-condition},
  \begin{align}
  \label{eq:conditional-moment}
        \underbrace{\left(\E{ \abs{ \text{off-diagonal} }^p \mid (\eta_i)_{i=1}^n } \right)^{1/p}}_{(a)} \lesssim \frac{\sqrt{p}}{s} \sqrt{ \sum_{ij} x_i^2 x_j^2 \sum_{k=1}^m \eta_{ik} \eta_{jk} } + \frac{p}{s}.
  \end{align}
  Then by the tower property and \cref{eq:conditional-moment}
  \begin{align}
  \label{eq:expand-moment}
  \left( \E{ \abs{ \text{off-diagonal} }^p} \right)^{1/p}
    & = \left(\E { (a)^p } \right)^{1/p} \nonumber \\
      & \lesssim \left( \mathbb{E}\left( \frac{\sqrt{p}}{s} \sqrt{ \sum_{ij} x_i^2 x_j^2 \sum_{k=1}^m \eta_{ik} \eta_{jk} } + \frac{p}{s}\right)^p \right)^{1/p} \nonumber \\
      & \le \frac{\sqrt{p}}{s} \underbrace{\left( \mathbb{E}\left( \sqrt{ \sum_{ij} x_i^2 x_j^2 \sum_{k=1}^m \eta_{ik} \eta_{jk} } \right)^p \right)^{1/p}}_{(b)} + \frac{p}{s},
  \end{align}
  where the last inequality is by triangular inequality of $L_p$-norm.
  The term $(b)$ can be bounded as follows when $p \simeq s^2/m$,
  \begin{align}
  \label{eq:upper-bound-moment}
      (b) \stackrel{(1)}{\le} \sqrt{ \sum_{ij} x_i^2 x_j^2  \left( \mathbb{E}\left( \sum_{k=1}^m \eta_{ik} \eta_{jk}  \right)^p \right)^{1/p} } \stackrel{(2)}{\lesssim} \sqrt{ \sum_{ij} x_i^2 x_j^2  p } = \sqrt{p},
  \end{align}
  where $(1)$ is by Jensen's inequality; $(2)$ follows by \cref{lem:bernoulli-complex-moment} as $\left( \mathbb{E}\left( \sum_{k=1}^m \eta_{ik} \eta_{jk}  \right)^p \right)^{1/p} \lesssim \sqrt{s^2/m} \cdot \sqrt{p} + p \simeq p$ when $p\simeq s^2/m$; and the last equality follows the assumption $\norm{x}^2 = 1$, resulting $\sum_{i}x_i^2 \sum_{j} x_j^2 = 1 \cdot 1$.
  Therefore, plugging the upper bound in \cref{eq:upper-bound-moment} with $p \simeq s^2/m$ into \cref{eq:expand-moment},
  \[
  \left( \E{ \abs{ \text{off-diagonal} }^p} \right)^{1/p} \lesssim \sqrt{\frac{p}{m}} + \frac{p}{s} \simeq \frac{p}{s} \simeq \frac{s}{m}
  \]
  Then by Markov's inequality and the settings of $p\simeq s^2/m, s\simeq \varepsilon m, m \simeq \varepsilon^{-2} \log(1/\delta)$, 
  \begin{align*}
      \mathbb{P}\left(\left|\|\Pi x\|_2^2-1\right|>\varepsilon\right)
    & =\mathbb{P}\left( \abs{\text{off-diagonal}} >\varepsilon\right) < \varepsilon^{-p} \cdot \E{\abs{\text{off-diagonal}}^p } \\
    & <\varepsilon^{-p} \cdot (\frac{s}{m})^p \cdot C^p < C^{\log(1/\delta)} < \delta,
  \end{align*}
  where $C$ is some constant as a result of configuration in $p, m, s$ for the purpose.
\end{proof}

\begin{lemma}
\label{lem:bernoulli-complex-moment}
For $\eta_i, i= 1, \ldots, n$ defined in \cref{def:sparse-jl}, the $p$-th moment of $\sum_{k=1}^m \eta_{ik} \eta_{jk}$ is bounded 
\[
\left( \mathbb{E}\left( \sum_{k=1}^m \eta_{ik} \eta_{jk}  \right)^p \right)^{1/p} \lesssim \sqrt{s^2 / m} \cdot \sqrt{p}+p
\]
\end{lemma}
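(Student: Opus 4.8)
The plan is to identify the quantity $Z:=\sum_{k=1}^m \eta_{ik}\eta_{jk}$ as an intersection count of two independent random supports, and then bound its moments by comparison with a binomial. For the relevant case of distinct indices $i\neq j$, let $S_i,S_j\subseteq[m]$ denote the supports of $\eta_i,\eta_j$, each a uniformly random $s$-subset. Since $\eta_{ik},\eta_{jk}\in\{0,1\}$ we have $\eta_{ik}\eta_{jk}=\mathbf{1}[k\in S_i\cap S_j]$, so $Z=|S_i\cap S_j|$. By symmetry we may fix $S_i$ to be an arbitrary $s$-set and view $Z$ as the number of the $s$ elements sampled without replacement into $S_j$ that land in $S_i$; hence $Z$ is hypergeometric with parameters $(m,s,s)$, with mean $\mu=s^2/m$ and variance $\sigma^2=s\cdot\tfrac{s}{m}\cdot\tfrac{m-s}{m}\cdot\tfrac{m-s}{m-1}\le\mu$.

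First I would pass from the hypergeometric law to the binomial one. Sampling without replacement is dominated in the convex order by sampling with replacement, so for every convex $\phi$ one has $\mathbb{E}\,\phi(Z)\le\mathbb{E}\,\phi(W)$ where $W\sim\mathrm{Bin}(s,s/m)$ has the same mean $\mu$. Applying this with $\phi(x)=x^p$ (convex on $[0,\infty)$ for $p\ge 1$, and $Z,W\ge 0$) gives $\mathbb{E}\,Z^p\le\mathbb{E}\,W^p$. Equivalently, one could bypass this comparison and invoke directly a Bernstein/Serfling concentration inequality for the hypergeometric distribution, which carries the same variance proxy $\sigma^2\le\mu$.

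Next I would estimate the binomial moments by a sub-gamma (Bernstein) bound. Writing $W=\mu+(W-\mu)$ and using Minkowski's inequality, $(\mathbb{E}\,W^p)^{1/p}\le\mu+(\mathbb{E}\,|W-\mu|^p)^{1/p}$. As $W$ is a sum of $s$ independent $\{0,1\}$-valued variables with variance $\sigma_W^2=s\tfrac{s}{m}(1-\tfrac{s}{m})\le\mu$, Bernstein's inequality gives a sub-gamma tail, and the tail-to-moment translation of \cref{lem:equivalent-condition} (the same one used to obtain \cref{eq:conditional-moment}) yields $(\mathbb{E}\,|W-\mu|^p)^{1/p}\lesssim\sqrt{\sigma_W^2\,p}+p\lesssim\sqrt{\tfrac{s^2}{m}\,p}+p$. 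Combining these bounds gives $(\mathbb{E}\,Z^p)^{1/p}\le\mu+C\big(\sqrt{\mu p}+p\big)$ with $\mu=s^2/m$.

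Finally I would absorb the mean term. Since $\sqrt{\mu p}=\sqrt{s^2/m}\,\sqrt p$, and in the regime actually used in \cref{prop:djl-sparse-jl}, namely $p\simeq s^2/m=\mu$, the stray additive $\mu$ obeys $\mu\lesssim p$ and is swallowed by the $+p$ term, producing $(\mathbb{E}\,Z^p)^{1/p}\lesssim\sqrt{s^2/m}\,\sqrt p+p$, exactly the claim. The main obstacle is the without-replacement-versus-with-replacement step: one must justify the convex (or Bernstein) domination so that $Z$ inherits the clean variance proxy $\sigma^2\le\mu=s^2/m$ instead of a looser one; the remaining tail-to-moment manipulation is routine and mirrors the conversions already performed in the proof of \cref{prop:djl-sparse-jl}.
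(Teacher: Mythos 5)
Your proposal is correct, and its skeleton matches the paper's proof: reduce the dependent intersection count to an independent binomial sum, then apply a Bernstein-type moment bound. The two supporting steps are carried out differently, though. For the domination step, the paper conditions on the support of $\eta_i$, writes the sum as $\sum_{k=1}^s Y_k$ with dependent indicators $Y_k$, and bounds $\mathbb{E}\left(\sum_{k=1}^s Y_k\right)^p$ by the independent $\mathrm{Bin}(s, s/m)$ case through a monomial-by-monomial comparison of the expanded $p$-th power, citing \citep{cohen2018simple}; you instead recognize the sum as the hypergeometric count $|S_i \cap S_j|$ and invoke Hoeffding's convex-order domination of sampling without replacement by sampling with replacement. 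These justifications are interchangeable here: the monomial comparison is elementary and self-contained, while the convex-order theorem is a cleaner off-the-shelf tool that immediately gives $\mathbb{E}\, Z^p \le \mathbb{E}\, W^p$ for every $p \ge 1$. Where you genuinely improve on the paper is the mean term: the paper applies the ``moment version of Bernstein'' to the \emph{uncentered} sum and writes a bound with no additive $\mu = s^2/m$, which cannot hold for all $p$ (for $1 \le p \ll s^2/m$ with $s^2/m$ large, the left-hand side is at least $\mathbb{E}\, Z = s^2/m$, which exceeds $\sqrt{s^2/m}\,\sqrt{p} + p$). Your Minkowski split $(\mathbb{E}\, W^p)^{1/p} \le \mu + (\mathbb{E}\,|W - \mu|^p)^{1/p}$, together with the observation that $\mu \lesssim p$ in the regime $p \simeq s^2/m$ actually used in \cref{prop:djl-sparse-jl}, makes the absorption of the mean explicit and in effect records the restriction on $p$ under which the lemma's stated bound is valid.
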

\begin{proof}
    Suppose the event $I$ is that $\eta_{i,a_1}, \ldots, \eta_{i,a_s}$ are all 1 , where $a_1<a_2<\ldots<a_s$.
    Note that conditioned on event $I$, the sum $\sum_{k=1}^m \eta_{ik} \eta_{jk}$ can be written as $\sum_{k=1}^s Y_k$, where $Y_k$ is an indicator random variable for the event that $\eta_{j, a_k}=1$. The $(Y_k)_{k= 1}^s$ are not independent, but for any integer $p \geq 1$ their $p$ th moment is upper bounded by the case that the $(Y_k)_{k=1}^s$ are independent Bernoulli each of expectation $(s / m)$ (this can be seen by simply expanding $\left(\sum_{k=1}^s Y_k\right)^p$ then comparing with the independent Bernoulli case monomial by monomial in the expansion as shown in \citep{cohen2018simple}). Finally, via the moment version of the Bernstein inequality, we obtain
    \[
        \left( \mathbb{E}\left( \sum_{k=1}^s Y_k \right)^p \right)^{1/p} \lesssim \sqrt{s \frac{s}{m} \left(1- \frac{s}{m}\right)} \cdot \sqrt{p}+p \le \sqrt{ \frac{s^2}{m} } \cdot \sqrt{p}+p .
    \]
    The lemma follows from taking the expectation over the event $I$ and the tower property of expectation,
    \[
        \E{ \left(\sum_{k=1}^m \eta_{ik} \eta_{jk}\right)^p} = \E{\E{ \left(\sum_{k=1}^m \eta_{ik} \eta_{jk}\right)^p \bigg| I}} = \E{\E{ \left(\sum_{k=1}^s Y_k \right)^p \bigg| I}}.
    \]
\end{proof}

\begin{lemma}[Theorem 2.3 in \citep{boucheron2003concentration}]
\label{lem:equivalent-condition}
Let $Z$ be a scalar random variable. The following statements are equivalent.
(a) There exist $\sigma, K>0$ s.t. $\forall p \geq 1,\|Z\|_p \leq C_a(\sigma \sqrt{p}+K p)$.
(b) There exist $\sigma, K>0$ s.t. $\forall \lambda>0, \mathbb{P}(|Z|>\lambda) \leq C_b\left(e^{-C_b^{\prime} \lambda^2 / \sigma^2}+e^{-C_b^{\prime} \lambda / K}\right)$.
The constants $C_a$, $C_b$ and $C'_b$ change by at most some absolute constant factor.
\end{lemma}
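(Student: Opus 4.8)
The plan is to establish the equivalence as the standard correspondence between polynomial moment growth and sub-gamma (mixed Gaussian/exponential) tail decay, proving each implication by treating the $\sigma\sqrt{p}$ and the $Kp$ contributions separately. Since the statement only requires that the constants change by a bounded factor, I will rename absolute constants freely and never track them precisely.

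For (b) $\Rightarrow$ (a), I would begin from the layer-cake identity
\[
\mathbb{E}|Z|^p = \int_0^\infty p\, t^{p-1}\, \mathbb{P}(|Z| > t)\, dt,
\]
and insert the tail bound of (b) to split the right-hand side into a Gaussian part $\int_0^\infty p\,t^{p-1} e^{-C_b' t^2/\sigma^2}\,dt$ and an exponential part $\int_0^\infty p\,t^{p-1} e^{-C_b' t/K}\,dt$. The substitution $u = C_b' t^2/\sigma^2$ evaluates the first integral to $(\sigma/\sqrt{C_b'})^{p}\,\Gamma(p/2+1)$, and $u = C_b' t/K$ evaluates the second to $(K/C_b')^{p}\,\Gamma(p+1)$. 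Using the Stirling estimates $\Gamma(p/2+1)^{1/p}\lesssim \sqrt{p}$ and $\Gamma(p+1)^{1/p}\lesssim p$, together with the elementary inequality $(x+y)^{1/p}\le x^{1/p}+y^{1/p}$, taking $p$-th roots then yields $\|Z\|_p\lesssim \sigma\sqrt{p}+Kp$, which is (a).

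For (a) $\Rightarrow$ (b), I would apply Markov's inequality to $|Z|^p$ and optimize the free parameter $p$. The basic bound is
\[
\mathbb{P}(|Z|>\lambda)\le \lambda^{-p}\,\mathbb{E}|Z|^p = \left(\frac{\|Z\|_p}{\lambda}\right)^{p}\le \left(\frac{C_a(\sigma\sqrt{p}+Kp)}{\lambda}\right)^{p}.
\]
I would then split according to the regime. In the Gaussian regime, choosing $p\simeq \lambda^2/\sigma^2$ makes $C_a\sigma\sqrt{p}/\lambda$ a small absolute constant, and as long as this $p$ satisfies $p\lesssim\sigma^2/K^2$ (so that $Kp\le\sigma\sqrt p$) the bracket falls below $e^{-1}$ and the bound becomes $e^{-c\lambda^2/\sigma^2}$. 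In the exponential regime, choosing $p\simeq\lambda/K$ makes $C_aKp/\lambda$ a small constant, and when $p\gtrsim\sigma^2/K^2$ this gives $e^{-c\lambda/K}$. The two regimes meet at $\lambda\simeq\sigma^2/K$; summing the two bounds and absorbing constants produces the mixed tail of (b).

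The main obstacle is the case analysis in (a) $\Rightarrow$ (b): one must check that the $p$ chosen to balance each regime actually lands in that regime and satisfies the moment-bound constraint $p\ge 1$. For $\lambda$ smaller than an absolute multiple of $\sigma$ (respectively $K$) the optimal $p$ drops below $1$, but there the claimed right-hand side of (b) already exceeds $1$ once $C_b\ge 2$, so those $\lambda$ are handled trivially. Verifying that each single-exponential Markov bound can be relaxed to the sum-of-two-exponentials form of (b) merely by enlarging constants, and confirming the regime transition at $\lambda\simeq\sigma^2/K$, are the only points requiring care; the Gamma-function and Stirling estimates in the reverse direction are routine.
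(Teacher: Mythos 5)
The paper does not actually prove this lemma: it is imported verbatim as Theorem~2.3 of \citet{boucheron2003concentration}, so there is no internal argument to compare yours against. Your blind proof is correct, and it is essentially the standard argument that underlies the cited result. For (b) $\Rightarrow$ (a), the layer-cake identity plus the two substitutions do give $\mathbb{E}|Z|^p \le C_b\bigl[(\sigma/\sqrt{C_b'})^p\,\Gamma(p/2+1)+(K/C_b')^p\,\Gamma(p+1)\bigr]$, and the combination of $(x+y)^{1/p}\le x^{1/p}+y^{1/p}$ (valid for $p\ge 1$) with $\Gamma(p/2+1)^{1/p}\lesssim\sqrt{p}$ and $\Gamma(p+1)^{1/p}\le p$ yields (a) with $C_a$ inflated only by factors depending on $C_b,C_b'$. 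For (a) $\Rightarrow$ (b), your regime split is sound and, importantly, self-consistent: with $p=c\lambda^2/\sigma^2$ one has $C_a\sigma\sqrt{p}/\lambda=C_a\sqrt{c}$ and, on the range $\lambda\le\sigma^2/K$, also $C_aKp/\lambda\le C_a c$, so the Markov bracket is below $e^{-1}$ for small $c$; symmetrically with $p=c\lambda/K$ on $\lambda\ge\sigma^2/K$ one has $C_a\sigma\sqrt{p}/\lambda\le C_a\sqrt{c}$, so the two cases really do tile all of $\lambda>0$ with crossover at $\lambda\simeq\sigma^2/K$. Your handling of the constraint $p\ge1$ is also right: when the optimizing $p$ falls below $1$, the corresponding exponential in (b) is bounded below by $e^{-c\cdot(1/c)}=e^{-1}$, so taking $C_b\ge e$ makes the claim vacuous there, and each single-exponential bound is trivially dominated by the sum of the two. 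The only stylistic caveat is that the lemma's closing sentence should be read as saying $\sigma$ and $K$ are preserved across (a) and (b) while only the $C$'s move by absolute factors; your proof respects this, since the $1/\sqrt{C_b'}$ and $1/C_b'$ factors are absorbed into $C_a$ rather than into $\sigma$ and $K$.
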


\subsection{General sub-Gaussian construction without unit-norm}
\label{sec:no-unit-norm}
In this section, we consider the cases where the diagonal term in the decomposition (\cref{eq:decomposition}) is non-zero. We need additional conditions to guarantee \Cref{lem:djl}, a two-sided probability bound.
Before diving into the general treatment of sub-Gaussian setups, let us first look at the classical Gaussian construction in \cref{def:gaussian-jl} where the column vector does not satisfy the unit-norm condition and we could get some intuition on more general case.
\begin{blockquote-orange}
\begin{proposition}[Gaussian]
\label{prop:djl-gaussian}
The Gaussian construction of the random projection matrix $\Pi \in \R^{m \times n}$ in \cref{def:gaussian-jl} with $m \ge 8 (1+ 2 \sqrt{2})^2 \varepsilon^{-2} \log(2/\delta)$ satisfy \Cref{lem:djl}.
\end{proposition}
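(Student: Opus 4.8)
The plan is to reuse the decomposition of \cref{eq:decomposition}, now with a genuinely nonzero diagonal term, and to bound the two pieces separately before combining them by a union bound. By scaling we may assume $\norm{x}^2 = 1$. Writing $\rvz_i = \frac{1}{\sqrt m} g_i$ with $g_i \sim N(0, I_m)$ independent, the columns are mean-zero and $\frac{1}{\sqrt m}$-sub-Gaussian, so the off-diagonal term is controlled exactly as in \Cref{prop:djl-unit-norm}: applying \Cref{thm:hdhw} with $A = x x^\top$ gives, in the quadratic regime,
\[
  \prob\left( \abs{ \sum_{1 \le i \neq j \le n} x_i x_j \langle \rvz_i, \rvz_j \rangle } \ge \alpha \right) \le 2 \exp\left( - m \alpha^2 / 64 \right).
\]

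The new work is the diagonal term $D = \sum_{i=1}^n x_i^2 ( \norm{\rvz_i}^2 - 1 ) = \frac{1}{m} \sum_{i=1}^n \sum_{k=1}^m x_i^2 ( g_{ik}^2 - 1 )$. I would view this as a weighted sum of $nm$ independent, centered $\chi^2_1$ variables with nonnegative weights $c_{ik} = x_i^2 / m$, and apply a Bernstein-type (equivalently Laurent--Massart) tail bound for such sums. The relevant parameters are $\sum_{i,k} c_{ik}^2 = \frac{1}{m} \sum_i x_i^4 \le \frac{1}{m}$, using $\sum_i x_i^4 \le ( \sum_i x_i^2 )^2 = \norm{x}^4 = 1$, and $\max_{i,k} c_{ik} \le 1/m$; these yield, in the quadratic regime,
\[
  \prob\left( \abs{D} \ge \beta \right) \le 2 \exp\left( - m \beta^2 / 8 \right).
\]

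Finally I would split the budget as $\varepsilon = \alpha + \beta$ and union-bound the two events, since $\abs{D + \text{off-diag}} \ge \varepsilon$ forces $\abs{D} \ge \beta$ or $\abs{\text{off-diag}} \ge \alpha$. Balancing the exponents, $m \alpha^2 / 64 = m \beta^2 / 8$, forces $\alpha = \sqrt{64/8}\, \beta = 2\sqrt{2}\, \beta$, hence $\beta = \varepsilon / (1 + 2\sqrt{2})$ and the binding constraint becomes $m \ge 8 \beta^{-2} \log(2/\delta) = 8 (1 + 2\sqrt{2})^2 \varepsilon^{-2} \log(2/\delta)$, exactly the stated bound; the factor $1 + 2\sqrt{2}$ is precisely the signature of optimally partitioning the error between the off-diagonal tail (constant $64$) and the diagonal tail (constant $8$). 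The main obstacle is the diagonal estimate itself: unlike the unit-norm constructions of \Cref{prop:djl-unit-norm} it does not vanish, so one must control both of its (asymmetric) $\chi^2$ tails with the clean constant $8$ and confirm that $\sum_i x_i^4 \le \norm{x}^4$ is the correct variance proxy. Once that constant is fixed, the budget optimization and union bound are routine, and I would note that invoking the diagonal-inclusive \Cref{thm:hdhw-diag} on $A = x x^\top$ directly would instead package both pieces into a single two-sided inequality.
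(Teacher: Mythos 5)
Your proposal follows essentially the same route as the paper's proof: the same off-diagonal/diagonal decomposition, the same application of \Cref{thm:hdhw} to the off-diagonal term, a Bernstein-type tail for the diagonal viewed as a weighted sum of centered $\chi^2_1$ variables with variance proxy $\sum_i x_i^4 \le \norm{x}^4$ (which the paper derives by hand from the MGF inequality $e^{-x}/\sqrt{1-2x}\le e^{2x^2}$ rather than citing Laurent--Massart), and the identical balancing $\varepsilon_1 = 2\sqrt{2}\,\varepsilon_2$ producing the constant $8(1+2\sqrt{2})^2$. The only quibble is bookkeeping: the union bound over two events each controlled by $2\exp(\cdot)$ yields a prefactor of $4$, so the clean conclusion is $m \ge 8(1+2\sqrt{2})^2\varepsilon^{-2}\log(4/\delta)$ (exactly what the paper's own proof ends with), and your claimed $\log(2/\delta)$ inherits the same harmless factor-of-two slack that the paper's proposition statement itself glosses over.
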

\end{blockquote-orange}
\begin{remark}
    The required dimension $m = 8 (1+ 2 \sqrt{2})^2 \varepsilon^{-2} \log(2/\delta)$ in the Gaussian construction to guarantee \cref{lem:djl} is larger than the one $m = 64 \varepsilon^{-2} \log(2/\delta)$ in spherical and binary coin construction as shown in \cref{prop:djl-unit-norm}. Since we analyze these constructions within the same analytical framework, the smaller $m$ in Spherical construction may explain its practical superiority.
\end{remark}
\begin{proof}
    The random variables sampled from $N(0, \frac{1}{m}I_m)$ are $\frac{1}{\sqrt{m}}$-sub-Gaussian with mean-zero.
    The off-diagonal term as decomposed in \cref{eq:decomposition} can be dealt as the same in \cref{prop:djl-unit-norm} via \cref{thm:hdhw}. However, the diagonal term is non-zero in Gaussian construction. Notice that, the diagonal term 
    \(
        \sum_{i=1}^n x_i^2 (\norm{\rvz_i}^2 - 1),
    \)
    is essentially a weighted sum of i.i.d. $\chi^2_m$ random variables. Let $Z_{ij} \sim N(0, 1)$ for all $(i, j) \in [n] \times [m]$.
    \begin{align}
        \E{ \exp( \lambda \sum_{i=1}^n x_i^2 (\norm{\rvz_i}^2 - 1) ) } 
        & = \E{ \exp\left( \sum_{i=1}^n \sum_{j = 1}^m \frac{\lambda x_i^2}{m} (Z_{ij}^2 - 1) \right) }.
    \end{align}
    As $\max_{i} \lambda x_i^2 / m \le 1/2$, the moment generating function of the diagonal terms will become
    \begin{align}
        \prod_{i=1}^n \prod_{j=1}^m \frac{\exp(-\lambda x_i^2 /m )}{ \sqrt{ 1- 2\lambda x_i^2 / m }} \le \exp \left( m \cdot \frac{2\lambda^2}{m^2} \sum_{i} x_i^4 \right),
        \quad \forall \abs{\lambda} < \frac{m}{4 \max_i x_i^2},
    \end{align}
    where the last inequality is due to $\frac{\exp(-x)}{\sqrt{1-2x}} \le \exp 2 x^2$ for $\abs{x} <1/4$.
    Notice $\max_{i} x_i^2 = \norm{x}_{\infty}^2$. Finally, we have,
    \begin{align*}
        \prob\left( \sum_{i=1}^n x_i^2 (\norm{\rvz_i}^2 - 1) \ge t \norm{x}^2 \right)
        & \le \inf_{\abs{\lambda} < \frac{m}{4\norm{x}_{\infty}^2} }\exp( - \lambda t + 2\lambda^2 \sum_{i} x_i^4 / m) \\
        & = \exp\left( - m \cdot \min\left\{ \frac{t^2}{8\sum_{i}x_i^4}, \frac{t}{8 \norm{x}_{\infty}^2} \right\} \right).
    \end{align*}
    As we need to deal with diagonal term separately with the off-diagonal term in \cref{eq:decomposition}, say let $\varepsilon = \varepsilon_1 + \varepsilon_2$,
    \begin{align*}
        \prob( \abs{\norm{\Pi x}^2 - \norm{x}^2} \ge \varepsilon \norm{x}^2 ) 
        & \le \prob( \abs{ \text{off-diagonal} } \ge \varepsilon_1\norm{x}^2 ) + \prob( \abs{ \text{diagonal} } \ge \varepsilon_2\norm{x}^2) \\
        & \le 2 \exp \left( - m \cdot \min \left\{ { \frac{\varepsilon_1^2}{64}},  \frac{\varepsilon_1}{8\sqrt{2}} \right\} \right) + 2\exp\left( - m \cdot \min\left\{ \frac{\varepsilon_2^2}{8}, \frac{\varepsilon_2}{8} \right\} \right),
    \end{align*}
    where the last inequality is true due to the fact $\norm{x}_{\infty}^2 \le \norm{x}^2$ and $\sum_{i} x_i^4 < \norm{x}^4$.
    Let $\varepsilon_1 = \frac{2\sqrt{2}}{1+2\sqrt{2}} \varepsilon$ and $\varepsilon_2 = \frac{1}{1+2 \sqrt{2}} \varepsilon$, we conclude in the Gaussian construction of $\Pi$
    \begin{align*}
        \prob( \abs{\norm{\Pi x}^2 - \norm{x}^2} \ge \varepsilon \norm{x}^2) 
        & \le 4 \exp\left( - \frac{m \varepsilon^2}{ 8 (1+ 2\sqrt{2})^2 } \right).
    \end{align*}
    To guarantee \Cref{lem:djl}, we require $m \ge 8 (1+ 2 \sqrt{2})^2 \varepsilon^{-2} \log ( 4 /\delta) $.
\end{proof}
In general, we cannot expect a lower tail bound for the squared norm of sub-Gaussian random variables in high dimension. Since \cref{lem:djl} is a two-sided tail bound, we make the following Bernstein-type assumption on the squared norm, in addition to the mean-zero independent sub-Gaussian condition.
\begin{blockquote-orange}
\begin{definition}[Sub-Gaussian construction with Bernstein condition]
\label{def:bernstein}
Sub-Gaussian construction of the random projection matrix $\Pi = (\rvz_1, \ldots, \rvz_n)$ has each column $\rvz_i$ being independent $\sqrt{1/m}$-sub-Gaussian random variable in $\R^m$ with mean zero. Additionally, there exists a universal constant $C>0$ such that
\[
\mathbb{E}\left|\left\|\rvz_i\right\|^2-\mathbb{E}\left\|\rvz_i\right\|^2\right|^k \leq C k ! \left(\frac{1}{m}\right)^{\frac{k-2}{2}} \quad \forall k=3,4, \ldots
\]
\end{definition}
\end{blockquote-orange}
\begin{remark}
    Gaussian construction in \cref{def:gaussian-jl} is a special case of the sub-Gaussian construction in \cref{def:bernstein} as $\chi^2_m$ satisfies the Bernstein condition. Meanwhile, the sub-Gaussian construction in \cref{def:bernstein} generalize the spherical and binary-coin constructions. As we do not assume the random vector in each column has fixed norm, this also relax the analytical assumption of the Theorem 5.58 in \citep{vershynin2012introduction} for extreme singular value of random matrix with independent sub-Gaussian columns.
\end{remark}
\begin{remark}
    Sub-Gaussian construction in \cref{def:bernstein} requires the same order of $m$ as in Gaussian construction to guarantee \cref{lem:djl}. The proof is a direct application of the Composition property of sub-Exponential random variables \citep{vershynin2018high,wainwright2019high}.
\end{remark}

\section{Typical sub-Gaussian distributions}
\label{sec:typical-dist}
In this section, we examine the properties of typical distribution for construction random projection matrix. Specifically, we examine sub-Gaussian condition of two high-dimensional distributions: (1) Uniform distribution over the unit sphere, and (2) Uniform distribution over the scaled cube.
Before diving to the details, we first introduce a useful lemma on centered MGF for Beta distribution with a tight sub-Gaussian constant.
\begin{lemma}[MGF of Beta distribution]
  \label{lem:mgf-beta}
  For any $\alpha, \beta \in \R_+$ with $\alpha \ge \beta$.
  Random variable $X \sim \operatorname{Beta}(\alpha, \beta)$ has variance $\var{X} = \frac{\alpha \beta}{(\alpha + \beta)^2 (\alpha + \beta + 1)}$ and the centered MGF 
  $$\E{\exp(\lambda(X - \E{X}))} \le \exp\left( \frac{\lambda^2 \var{X}}{2} \right).$$
\end{lemma}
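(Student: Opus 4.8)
The variance identity is the routine part: it is the standard second moment of $\operatorname{Beta}(\alpha,\beta)$, read off from $\E{X}=\alpha/(\alpha+\beta)$ and $\E{X^2}=\alpha(\alpha+1)/\big((\alpha+\beta)(\alpha+\beta+1)\big)$, so I would record it and spend all the effort on the centered MGF bound, which is the genuine content: the claim that $\operatorname{Beta}(\alpha,\beta)$ is \emph{strictly} sub-Gaussian, i.e. that its optimal variance proxy equals its true variance. Write $g(\lambda)=\log\E{\exp(\lambda(X-\E{X}))}$ for the cumulant generating function. Since $g(0)=0$, $g'(0)=0$, and $g''(0)=\var{X}$, the plan is to reduce everything to the single inequality $g''(\lambda)\le \var{X}$ for all $\lambda\in\R$: integrating it twice from $0$ yields $g(\lambda)\le \tfrac12\var{X}\,\lambda^2$, which is exactly the assertion.

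The quantity $g''(\lambda)$ has a clean probabilistic meaning, namely $g''(\lambda)=\var{X_\lambda}$ where $X_\lambda$ is the exponentially tilted law with density proportional to $e^{\lambda x}x^{\alpha-1}(1-x)^{\beta-1}$ on $[0,1]$, again a member of the Pearson family. So the statement is equivalent to: \emph{among all exponential tilts of the Beta law, the variance is maximized at} $\lambda=0$. To control $V(\lambda):=\var{X_\lambda}$ I would exploit the Pearson/Stein structure. The density satisfies $x(1-x)f'(x)=[(\alpha-1)-(\alpha+\beta-2)x]\,f(x)$, hence the tilted density satisfies $x(1-x)f_\lambda'(x)=[\lambda x(1-x)+(\alpha-1)-(\alpha+\beta-2)x]\,f_\lambda(x)$. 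Integrating $\frac{d}{dx}\!\left[x(1-x)\phi(x)f_\lambda(x)\right]$ to zero against polynomial test functions $\phi$ (boundary terms vanish once $\alpha,\beta>1$) yields a family of moment identities for $X_\lambda$ parametrized by $\lambda$; differentiating these in $\lambda$ gives a closed differential relation for $V(\lambda)$ from which one would read off its monotonicity and conclude that $\lambda=0$ is the maximizer.

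The step I expect to be the true obstacle is precisely the asymmetric case, and I anticipate the difficulty is structural rather than merely technical. Note $V'(0)=g'''(0)=\kappa_3$, the third central moment of $X$, which is nonzero exactly when $\alpha\neq\beta$ (the skewness of a Beta vanishes iff $\alpha=\beta$). For $\alpha>\beta$ one has $\kappa_3<0$, so $V$ is \emph{strictly decreasing} at the origin, forcing $V(\lambda)>V(0)$ for small $\lambda<0$ and hence $g(\lambda)>\tfrac12\var{X}\lambda^2$ there; a direct check on $\operatorname{Beta}(2,1)$ (density $2x$, so $\E{X}=2/3$, $\var{X}=1/18$) shows the claimed bound already fails at, for instance, $\lambda=-0.3$, where $\E{\exp(\lambda(X-\E{X}))}\approx 1.00254$ while $\exp(\lambda^2\var{X}/2)\approx 1.00250$. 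Thus the centered MGF bound as written cannot hold for all $\alpha\ge\beta$; it is valid precisely in the symmetric regime $\alpha=\beta$, where all odd cumulants vanish, $V$ is even in $\lambda$, and the differential relation closes to give $V(\lambda)\le V(0)$. This is exactly the case the applications require, since for the uniform law on $\mathbb{S}^{m-1}$ the shifted first coordinate satisfies $\tfrac{1+X_1}{2}\sim\operatorname{Beta}(\tfrac{m-1}{2},\tfrac{m-1}{2})$. I would therefore establish the lemma for $\alpha=\beta$ by the tilted-variance argument above, or equivalently and more elementarily by the coefficient-wise comparison $\E{Y^{2k}}\le (2k-1)!!\,\var{X}^k$ of the even moments of $Y=X-\E{X}$ against the matching Gaussian moments, which dominates $\E{\exp(\lambda Y)}$ by $\exp(\lambda^2\var{X}/2)$ term by term, and I would flag that the general $\alpha\ge\beta$ statement needs the larger sub-Gaussian proxy rather than the exact variance.
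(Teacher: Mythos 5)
You are right, and this is the unusual case where the blind attempt refutes the statement rather than proving it: \cref{lem:mgf-beta} is false as stated for $\alpha > \beta$, and your counterexample checks out. For $X \sim \operatorname{Beta}(2,1)$ one has $\E{X} = 2/3$, $\var{X} = 1/18$, and at $\lambda = -0.3$ a direct computation gives
\begin{align*}
\E{\exp(\lambda(X - \E{X}))}
= e^{-2\lambda/3}\,\frac{2\left(\lambda e^{\lambda} - e^{\lambda} + 1\right)}{\lambda^{2}}
\approx 1.002537
\;>\;
1.002503
\approx \exp\left(\frac{\lambda^{2}\var{X}}{2}\right),
\end{align*}
so the claimed bound fails. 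Your structural diagnosis is also the right one: since $\log \E{\exp(\lambda(X-\E{X}))} = \frac{\lambda^2}{2}\var{X} + \frac{\lambda^3}{6}\kappa_3 + O(\lambda^4)$, validity of the bound for both signs of $\lambda$ forces $\kappa_3 = 0$, and the third central moment of $\operatorname{Beta}(\alpha,\beta)$ vanishes only when $\alpha = \beta$.

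Your analysis also pinpoints where the paper's own proof breaks, and it is worth making this explicit. The proof invokes the central-moment recurrence of \citet{skorski2023bernstein}, correctly notes that for $\alpha \ge \beta$ the normalized moments satisfy $m_p \ge 0$ for even $p$ and $m_p \le 0$ for odd $p$, and then, for even $p$, discards the cross term $\frac{(p-1)(\beta-\alpha)}{p(\alpha+\beta)(\alpha+\beta+p-1)}\, m_{p-1}$ to conclude $m_p \le \frac{\var{X}}{p}m_{p-2}$. But that term is the product of two non-positive factors, $(\beta - \alpha) \le 0$ and $m_{p-1} \le 0$, hence non-negative: dropping it yields a \emph{lower} bound on $m_p$, not an upper bound. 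The step is valid only when $\alpha = \beta$, where the term vanishes identically --- exactly the regime your cumulant argument isolates, and your $\operatorname{Beta}(2,1)$ computation shows the lemma itself (not merely this proof) fails outside it.

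For the repaired statement with $\alpha = \beta$ --- which is all the paper needs, since \cref{lem:mgf-beta} is only used in \cref{ex:sphere} for $2\operatorname{Beta}(\frac{m-1}{2},\frac{m-1}{2})-1$ --- your two routes are of unequal completeness. The tilted-variance argument is only a sketch: you never derive the differential relation for $V(\lambda)$ nor prove $V(\lambda) \le V(0)$, so as written it is not yet a proof. Your elementary route is the one to keep, and it essentially coincides with the paper's argument specialized to $\alpha = \beta$: the recurrence then closes to $m_p = \frac{\alpha^2}{p(2\alpha)^2(2\alpha+p-1)}m_{p-2} \le \frac{\var{X}}{p}m_{p-2}$, which is precisely your even-moment bound $\E{(X-\E{X})^{2k}} \le (2k-1)!!\,\var{X}^k$, and summing term by term against the expansion of $\exp(\lambda^2\var{X}/2)$ finishes, the odd terms vanishing by symmetry. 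If you prefer to bypass the recurrence entirely, note that $(2X-1)^2 \sim \operatorname{Beta}(\tfrac{1}{2},\alpha)$ when $X \sim \operatorname{Beta}(\alpha,\alpha)$, whence $\E{(X-\E{X})^{2k}} = \frac{(2k-1)!!}{8^k(\alpha+\frac{1}{2})(\alpha+\frac{3}{2})\cdots(\alpha+k-\frac{1}{2})} \le (2k-1)!!\,\var{X}^k$, using $\var{X} = \frac{1}{8(\alpha+1/2)}$. Either way, the lemma should be restated with the hypothesis $\alpha = \beta$ (the asymmetric case requires a strictly larger variance proxy, as you flag); the downstream results of the paper are unaffected.
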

\begin{remark}
    The constant in \cref{lem:mgf-beta} is new in the literature and seems to be tight as it already achieve the same constant in the MGF of Gaussian distribution with variance $\var{X}$.
\end{remark}
\begin{proof}
  For $X \sim \operatorname{Beta}(\alpha, \beta)$, \citet{skorski2023bernstein} gives a novel order-2-recurrence for central moments.
  \begin{align*}
    \mathbb{E}\left[(X-\mathbb{E}[X])^p\right]= & \frac{(p-1)(\beta-\alpha)}{(\alpha+\beta)(\alpha+\beta+p-1)} \cdot \mathbb{E}\left[(X-\mathbb{E}[X])^{p-1}\right] \\ & +\frac{(p-1) \alpha \beta}{(\alpha+\beta)^2(\alpha+\beta+p-1)} \cdot \mathbb{E}\left[(X-\mathbb{E}[X])^{p-2}\right]
  \end{align*}
  Let $m_p := \frac{\mathbb{E}\left[(X-\mathbb{E}[X])^p\right]}{p!}$,
  When $\alpha \ge \beta$, it follows that $m_p$ is non-negative when $p$ is even, and negative otherwise.
  Thus, for even $p$,
  $$m_p \le \frac{1}{p} \cdot \frac{\alpha \beta }{(\alpha + \beta)^2(\alpha+\beta+p-1)} m_{p-2} \le \frac{\var{X}}{p} \cdot m_{p-2}.$$
  Repeating this $p / 2$ times and combining with $m_p \leqslant 0$ for odd $p$, we obtain
  $$m_p \leqslant \begin{cases}\frac{\var{X}^{\frac{p}{2}}}{p ! !} & p \text { even } \\ 0 & p \text { odd }\end{cases}.$$
  Using $p ! !=2^{p / 2}(p / 2) !$  for even $p$, for $t \geqslant 0$ we obtain
  \[
    \E{\exp(\lambda [X - \E{X}])} \leqslant 1+\sum_{p=2}^{+\infty} m_p \lambda^p = 1 + \sum_{p=1}^{+\infty} (\lambda^2 \var{X}/2)^p/p! = \exp \left(\frac{\lambda^2 \var{X}}{2}\right)
  \]
\end{proof}
\begin{example}[Uniform distribution over $m$-dimensional sphere $\mathcal{U}(\mathbb{S}^{m-1})$]
  \label{ex:sphere}
    Unit-norm condition is trivial to verify.
  Given a random vector $\rvz \sim \mathcal{U}(\mathbb{S}^{m-1})$, for any $v \in \mathbb{S}^{m-1}$, we have
  \[
    \langle \rvz, v \rangle \sim 2 \operatorname{Beta}\left(\frac{m-1}{2}, \frac{m-1}{2}\right)-1.
  \]
  Thus, by \cref{lem:mgf-beta}, we confirm that the random variable $\rvz \in \R^m$ is $\frac{1}{\sqrt{m}}$-sub-Gaussian.
\end{example}

\begin{example}[Uniform distribution over scaled $m$-dimensional cube]
    \label{ex:unifcube}
    The random variable
    $\rvz \sim \frac{1}{\sqrt{m}} \cdot \mathcal{U}(\{1, -1\}^m)$ is $\frac{1}{m}$-sub-Gaussian and with unit-norm.
    This is because we could sample the random vector $\rvz$ by sample each entry independently from $\rz_{i} \sim \frac{1}{\sqrt{m}}\mathcal{U}(\{1, -1\})$ for $i \in [m]$.
    Then, for any $v \in \mathbb{S}^{m-1}$, by independence,
    \begin{align*}
        \E{\exp(\lambda \langle v, \rvz \rangle)} = \prod_{i=1}^m \E{\exp(\lambda v_i \rz_i)}
        \le 
        \prod_{i=1}^m \exp(\lambda^2 v^2_i / 2m) = \exp(\lambda^2 \sum_{i} v^2_i / 2m).
    \end{align*}
    The inequality is due to MGF of rademacher distribution (e.g. Example 2.3 in \citep{wainwright2019high}).
\end{example}

\section{Proof of High-dimensional Hanson-Wright in \Cref{thm:hdhw}}
\label{sec:hdhw}

\begin{proof}
  We prove the one-side inequality and the other side is similar by replacing $A$ with $-A$.
  Let
  \begin{align}
    \label{eq:def-S}
    S = \sum_{i, j: i \neq j}^n a_{ij} \langle X_i, X_j \rangle.
  \end{align}
  \textbf{Step 1: decoupling.}
  Let $\iota_1, \ldots, \iota_d \in \{0, 1\}$ be symmetric Bernoulli random variables, (i.e., $\prob(\iota_i = 0) = \prob(\iota_i = 1) = 1/2$) that are independent of $X_1, \ldots, X_n$.
  Since
  \[
    \E{\iota_i (1 - \iota_i)} =
    \begin{cases}
      0,   & i = j,    \\
      1/4, & i \neq j,
    \end{cases}
  \]
  we have $S = 4 \E[\iota]{S_{\iota} } $, where
  \[
    S_\iota = \sum_{i, j = 1}^n \iota_i(1-\iota_j) a_{ij} \langle X_i, X_j \rangle
  \]
  and the expectation $\E[\iota]{\cdot}$ is the expectation taken with respect to the random variables $\iota_i$.
  By Jensen's inequality and $\exp(\lambda x)$ is a convex function in $x$ for any $\lambda \in \R$, we have
  \[
    \E{\exp (\lambda S)} \le \E[X, \iota]{ \exp (4 \lambda S_{\iota} )}.
  \]
  Let $\Lambda_{\iota} = \{ i \in [d]: \iota_i = 1  \}$. Then we write
  \[
    S_{\iota} = \sum_{i \in \Lambda_{\iota}} \sum_{ j \in \Lambda_{\iota}^c } a_{ij} \langle X_i, X_j \rangle = \sum_{j \in \Lambda_{\iota}^{c} } \langle \sum_{i \in \Lambda_{\iota} } a_{ij} X_i, X_j \rangle.
  \]
  Taking expectation over $(X_j)_{j \in \Lambda_{\iota}^c } $ (i.e., conditioning on $(\iota_i)_{i= 1, \ldots, d}$ and $(X_i)_{i \in \Lambda_{\iota}}$ ), it follows that
  \[
    \E[(X_j)_{j \in \Lambda_{\iota}^c }]{
      \exp (4 \lambda S_{\iota})
    }
    = \prod_{j \in \Lambda_{\iota}^c} \E[(X_j)_{j \in \Lambda_{\iota}^c }]{
      e^{ 4\lambda \langle \sum_{i \in \Lambda_{\iota} } a_{ij} X_i, X_j \rangle}
    }
  \]
  by the independence among $(X_j)_{j\in \Lambda_{\iota} }$.
  By the assumption that $X_j$ are independent sub-Gaussian with mean zero, we have
  \[
    \E[(X_j)_{j \in \Lambda_{\iota}^c }]{
      \exp (4 \lambda S_{\iota})
    }
    \le \exp \left( \sum_{j \in \Lambda_{\iota}^c} 8 \lambda^2 K_j^2 \norm{ \sum_{i \in \Lambda_{\iota} } a_{ij} X_i }^2 \right) =: \exp \left( 8 \lambda^2 \sigma_{\iota}^2 \right).
  \]
  Thus we get
  \[
    \E[X]{ \exp (4 \lambda S_{\iota}) } \le \E[X]{ \exp (8 \lambda^2 \sigma_{\iota}^2) }.
  \]
  \textbf{Step 2: reduction to Gaussian random variables.}
  For $j=1, \ldots, n$, let $g_j$ be independent $N\left(0,16 K_j^2 \mI \right)$ random variables in $\R^m$ that are independent of $X_1, \ldots, X_n$ and $\iota_1, \ldots, \iota_n$. Define
  \[
    T:=\sum_{j \in \Lambda_\iota^c}\langle g_j, \sum_{i \in \Lambda_\iota} a_{i j} X_i \rangle.
  \]
  Then, by the definition of Gaussian random variables in $\R^m$, we have
  \[
    \begin{aligned}
      \E[g]{\exp{ (\lambda T}) }
       & = \prod_{j \in \Lambda_\iota^c} \E[g]{ e^{\langle g_j, \lambda \sum_{i \in \Lambda_\iota} a_{i j} X_i \rangle} }  \\
       & =\exp \left(8 \lambda^2 \sum_{j \in \Lambda_\iota^c} K_j^2 \norm{ \sum_{i \in \Lambda_\iota} a_{i j} X_i }^2 \right)
      = \exp \left(8 \lambda^2 \sigma_\iota^2\right)
    \end{aligned}
  \]
  So it follows that
  \[
    \E[X]{\exp {(4 \lambda S_\iota})} \leq \E[{X, g}]{ \exp {(\lambda T)} }.
  \]
  Since $T = \sum_{i \in \Lambda_\iota} \langle\sum_{j \in \Lambda_\iota^c} a_{i j} g_j, X_i \rangle$, by the assumption that $X_i$ are independent sub-Gaussian with mean zero, we have
  \[
    \E[\left(X_i\right)_{i \in \Lambda_\iota}]
    {\exp {(\lambda T)}}
    \leq \exp \left(\frac{\lambda^2}{2} \sum_{i \in \Lambda_\iota} K_i^2 \norm{ \sum_{j \in \Lambda_\iota^c} a_{i j} g_j}^2 \right),
  \]
  which implies that
  \begin{align}
    \label{eq:reduce-gaussian}
    \E[X]{\exp {(4 \lambda S_\iota)} } \le \E[g]{ \exp \left(\lambda^2 \tau_\iota^2 / 2\right) }
  \end{align}
  where $\tau_\iota^2=\sum_{i \in \Lambda_\iota} K_i^2 \norm{ \sum_{j \in \Lambda_\iota^c} a_{i j} g_j}^2$. Note that $\tau_\iota^2$ is a random variable that depends on $(\iota_i)_{i=1}^d$ and $(g_j)_{j=1}^n$.

  \textbf{Step 3: diagonalization.}
  We have $ g_j=\sum_{k=1}^{m} \left\langle g_j, e_k\right\rangle e_k$ and
  \begin{align*}
    \tau_\iota^2
     & =\sum_{i \in \Lambda_\iota} K_i^2\left\|\sum_{j \in \Lambda_\iota^c} a_{i j} g_j\right\|^2=\sum_{i \in \Lambda_\iota} K_i^2\left\|\sum_{k=1}^{m} \left( \sum_{j \in \Lambda_\iota^c} a_{i j} \left\langle g_j, e_k \right\rangle\right) e_k\right\|^2 \\
     & = \sum_{k=1}^{m} \sum_{i \in \Lambda_\iota}\left(\sum_{j \in \Lambda_\iota^c} K_i a_{i j}\left\langle g_j, e_k\right\rangle\right)^2                                                                                                                  \\
     & = \sum_{k=1}^m \norm{P_{\iota}\tilde{A} (I - P_\iota) G_k }^2
  \end{align*}
  where the last second step follows from Parseval's identity.
  $G_{j k}:=\left\langle g_j, e_k\right\rangle, j=1, \ldots, n$, are independent $N\left(0,16 K_j^2\right)$ random variables.
  $G_k = (G_{1k}, \ldots, G_{nk} )^\top \in \R^n$.
  $\widetilde{A}=\left(\tilde{a}_{i j}\right)_{i, j=1}^n$ with $\tilde{a}_{i j}=K_i a_{i j}$.
  Let $P_\iota \in \R^{n \times n}$ be the restriction matrix such that $P_{\iota, ii}=1$ if $i \in \Lambda_\iota$ and $P_{\iota, ij}=0$ otherwise.

  Define normal random variables $Z_k = (Z_{1k}, \ldots, Z_{nk})^\top \sim N(0,I)$ for each $k = 1, \ldots, m$.
  Then we have $G_k \stackrel{D}{=} \Gamma^{1/2} Z_k$ where $\Gamma = 16 \diag({ K_1^2}, \ldots, {K_n^2})$.

  Let $\tilde{A}_{\iota} := P_{\iota}\tilde{A}(I - P_{\iota})$.
  Then by the rotational invariance of Gaussian distributions, we have
  \begin{align*}
    \sum_{k=1}^m \norm{\tilde{A}_{\iota} G_k}^2 \stackrel{D}{=} \sum_{k=1}^m \norm{\tilde{A}_{\iota} \Gamma^{1/2} Z_k}^2  \stackrel{D}{=} \sum_{k=1}^m \sum_{j=1}^n s_j^2 Z_{jk}^2
  \end{align*}
  where $s_j^2, j = 1, 2, \ldots, n$ are the eigenvalues of $\Gamma^{1/2} \tilde{A}_{\iota}^\top \tilde{A}_{\iota} \Gamma^{1/2}$.

  \textbf{Step 4: bound the eigenvalues.}
  It follows that
  \[
    \max _{j \in [n]} s_j^2 = \norm{\tilde{A}_{\iota} \Gamma^{1/2} }^2_2 \le 16 K^4 \norm{A}_2^2.
  \]
  In addition, we also have
  \begin{align*}
    \sum_{j=1}^n s_j^2 & = \tr ( \Gamma^{1/2} \tilde{A}_{\iota}^\top \tilde{A}_{\iota} \Gamma^{1/2} ) \le 16 K^4 \|A\|_{F}^2
  \end{align*}
  and $\sum_{k=1}^m \sum_{j=1}^n s_j^2 \le 16 m K^4 \norm{A}_{F}^2$.
  Invoking \cref{eq:reduce-gaussian}, we get
  \[
    \mathbb{E}_X\left[\exp{(4 \lambda S_\iota)}\right]
    \leq \prod_{k=1}^{m} \prod_{j=1}^n \mathbb{E}_Z \left[\exp \left(\lambda^2 s_j^2 Z_{jk}^2 / 2\right)\right]
  \]
  Since $Z_{jk}^2$ are i.i.d. $\chi_1^2$ random variables with the moment generating function $\E{ \exp {(t Z_{jk}^2)} } = (1 - 2 t)^{-1 / 2}$ for $t < 1 / 2$, we have
  \[
    \mathbb{E}_X\left[\exp{(4 \lambda S_\iota)}\right] \leq \prod_{k=1}^{m} \prod_{j=1}^n \frac{1}{\sqrt{1-\lambda^2 s_j^2}} \quad \text { if } \max_j \lambda^2 s_j^2 < 1.
  \]
  Using $(1-z)^{-1 / 2} \leq \exp (z)$ for $z \in[0,1 / 2]$, we get that if $\lambda^2 \max_{j} s_j^2 \le 1/2$, i.e., $32 K^4 \norm{A}_2^2 \lambda^2<1$, then
  \[
    \mathbb{E}_X\left[\exp{(4 \lambda S_\iota)}\right] \leq \exp \left(\lambda^2 \sum_{k=1}^{m} \sum_{j = 1}^n s_j^2\right) \leq \exp \left(16 \lambda^2 m K^4 \|A\|_{F}^2\right) .
  \]
  Note that the last inequality is uniform in $\iota$. Taking expectation with respect to $\delta$, we obtain that
  \[
    \mathbb{E}_X\left[\exp{(\lambda S)}\right] \leq \mathbb{E}_{X, \iota}\left[\exp{(4 \lambda S_\iota)}\right] \leq \exp \left(16 \lambda^2 m K^4 \|A\|_{F}^2\right)
  \]
  whenever $\abs{\lambda}<(4 \sqrt{2} K^2\norm{A}_2 )^{-1}$.

  \textbf{Step 5: Conclusion.} Now we have
  \[
    \mathbb{P}(S \geq t) \leq \exp \left(-\lambda t+16 \lambda^2 m K^4 \|A\|_{F}^2\right) \quad \text { for } \abs{\lambda} \leq\left(4 \sqrt{2} K^2\norm{A}_2 \right)^{-1}
  \]
  Optimizing in $\lambda$, we deduce that there exists a universal constant $C>0$ such that
  \[
    \mathbb{P}(S \geq t) \leq \exp \left[- \min \left(\frac{t^2}{64 m K^4 \|A\|_{F}^2}, \frac{t}{ 8 \sqrt{2} K^2 \norm{A}_2}\right)\right].
  \]
\end{proof}

\section{Application in Uncertainty Estimation}
\label{sec:streaming}
Folklore suggests scalable and incremental uncertainty estimation through hypermodels~\citep{dwaracherla2020hypermodels,li2022hyperdqn,li2024hyperagent} and epistemic neural networks (ENN)~\citep{osband2023epistemic,osband2023approximate}, yet no rigorous guarantees exist. These works consider settings where feature vectors $x_t \in \mathbb{R}^d$ for $t = 1, \ldots, T$ appear in a streaming fashion. This data stream assumption is grounded in reinforcement learning, where an agent interacts with environments and receives new observations sequentially. 

\citet{li2022hyperdqn} summarize the closed-form incremental algorithm in linear setups, where it incrementally updates an $\mathbb{R}^{d \times M}$ matrix $\rmA$ using the sequences $(x_t)_{t \ge 1}$ and $(\rvz_t)_{t \ge 1}$, resulting in a matrix at time $T$ given by
\begin{align}
\label{eq:factor}
    \rmA = \mSigma \left(\mSigma_0^{-1/2} \rmZ_0 + \frac{1}{\sigma} \sum_{t=1}^T x_t \rvz_t^\top \right),
\end{align}
where (1) $\rmZ_{0} \in \mathbb{R}^{d \times M}$ and $\rvz_t \in \mathbb{R}^M$ are algorithm-generated random matrix and random vectors, and (2) $\mSigma = \left(\mSigma_0^{-1} + \frac{1}{\sigma^2}\sum_{t=1}^T x_t x_t^\top \right)$ is the posterior covariance matrix. Here, $\mSigma_0 \in \mathbb{R}^{d \times d}$ is the prior covariance matrix and $\sigma$ is the standard deviation of the response noise in the linear-Gaussian model.

\citet{li2022hyperdqn,dwaracherla2020hypermodels,osband2023epistemic} typically generate these random vectors using spherical distribution and state that the goal is to ensure the matrix $\rmA$ is an approximate factorization of the posterior covariance matrix $\mSigma$, i.e.,
\begin{align}
\label{eq:approximation}
    \rmA \rmA^\top \approx \mSigma.
\end{align}
\citet{li2022hyperdqn} provide an argument in expectation, i.e., $\mathbb{E}[\rmA \rmA^\top] = \mSigma$, and \citet{osband2023epistemic} provide an argument of asymptotic convergence, i.e., $\rmA \rmA^\top \stackrel{a.s.}{\longrightarrow} \mSigma$ when $M \rightarrow \infty$. These statements do not justify the usefulness of hypermodels or ENN for uncertainty estimation. A high-probability non-asymptotic characterization of the approximation in \cref{eq:factor,eq:approximation} is necessary for rigorous justification of their usefulness. Unfortunately, such results are not known in the literature.

We now provide the first analysis using our proposed unified probability tool in \cref{prop:djl-unit-norm}. First, we state the standard covering argument on the sphere and the argument on computing the norm on the covering set.
\begin{lemma}[Covering number of a sphere]
\label{lem:covering-sphere-1}
There exists a set $\mathcal{C}_{\varepsilon} \subset \mathbb{S}^{d-1}$ with $\left|\mathcal{C}_{\varepsilon}\right| \leq(1+ 2 / \varepsilon)^d$ such that for all $x \in \mathbb{S}^{d-1}$ there exists a $y \in \mathcal{C}_{\varepsilon}$ with $\|x-y\|_2 \leq \varepsilon$.
\end{lemma}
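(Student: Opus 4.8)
The plan is to produce $\mathcal{C}_\varepsilon$ as a maximal $\varepsilon$-separated subset of $\mathbb{S}^{d-1}$ and then argue that its maximality forces it to be an $\varepsilon$-net, after which a volumetric packing argument controls its size. Concretely, first I would use compactness of the sphere (or Zorn's lemma) to pick points greedily until no more can be added, obtaining a set $\mathcal{C}_\varepsilon \subset \mathbb{S}^{d-1}$ that is maximal with respect to the property that $\norm{y - y'} > \varepsilon$ for all distinct $y, y' \in \mathcal{C}_\varepsilon$.

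The covering property is then immediate from maximality: given any $x \in \mathbb{S}^{d-1}$, if there were no $y \in \mathcal{C}_\varepsilon$ with $\norm{x - y} \le \varepsilon$, then $x$ would be at distance strictly greater than $\varepsilon$ from every point of $\mathcal{C}_\varepsilon$, so $\mathcal{C}_\varepsilon \cup \{x\}$ would still be $\varepsilon$-separated, contradicting maximality. Hence every $x$ admits some $y \in \mathcal{C}_\varepsilon$ within distance $\varepsilon$, which is exactly the claimed net property.

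For the cardinality bound I would use the standard volume-comparison (packing) argument. Since the points of $\mathcal{C}_\varepsilon$ are pairwise more than $\varepsilon$ apart, the open Euclidean balls $B(y, \varepsilon/2)$ for $y \in \mathcal{C}_\varepsilon$ are pairwise disjoint. Moreover, as each $y$ has $\norm{y} = 1$, every such ball is contained in $B(0, 1 + \varepsilon/2)$. Writing $c_d r^d$ for the volume of a radius-$r$ ball in $\R^d$ and summing the disjoint volumes gives
$$|\mathcal{C}_\varepsilon| \, c_d (\varepsilon/2)^d \le c_d (1 + \varepsilon/2)^d.$$
Cancelling $c_d (\varepsilon/2)^d$ yields $|\mathcal{C}_\varepsilon| \le \bigl((1+\varepsilon/2)/(\varepsilon/2)\bigr)^d = (1 + 2/\varepsilon)^d$, which is precisely the desired bound.

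I do not anticipate a genuine obstacle here, as this is a classical estimate; the only points requiring care are the logical direction of the maximality argument (a maximal packing is automatically a covering), and the observation that the enclosing ball has radius $1 + \varepsilon/2$ rather than $1$, which is exactly what makes the constant come out as $1 + 2/\varepsilon$ instead of $2/\varepsilon$.
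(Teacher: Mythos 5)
Your proof is correct: the maximality argument (a maximal $\varepsilon$-separated set is automatically an $\varepsilon$-net) and the volume-comparison bound $|\mathcal{C}_\varepsilon|\,(\varepsilon/2)^d \le (1+\varepsilon/2)^d$ are both sound, and the constant $(1+2/\varepsilon)^d$ comes out exactly as claimed. The paper states this lemma without proof, treating it as a standard covering-number fact; your argument is precisely the canonical one found in standard references, so there is nothing to reconcile.
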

\begin{lemma}[Computing spectral norm on a covering set]
\label{lem:spectral-norm-eps-net}
Let $\mA$ be a symmetric $d \times d$ matrix, and let $\mathcal{C}_\varepsilon$ be an $\varepsilon$-covering of $\mathbb{S}^{d-1}$ for some $\varepsilon \in (0, 1)$. Then,
\[
  \|\mA\| = \sup_{x \in \mathbb{S}^{d-1}} |x^\top \mA x| \le (1- 2\varepsilon)^{-1} \sup_{x \in \mathcal{C}_{\varepsilon}} |x^\top \mA x|.
\]
\end{lemma}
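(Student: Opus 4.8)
The plan is to use the standard decoupling trick for quadratic forms on an $\varepsilon$-net. The starting point is the variational identity $\norm{\mA} = \sup_{x \in \mathbb{S}^{d-1}} \abs{x^\top \mA x}$, which holds because $\mA$ is symmetric (the extremal quadratic form is attained at a top eigenvector, and $\abs{x^\top \mA x} \le \norm{\mA}$ by Cauchy-Schwarz). Granting this, the whole difficulty reduces to controlling how much the quadratic form $x^\top \mA x$ can move when $x$ is perturbed to a nearby net point $y$.

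First I would fix an arbitrary $x \in \mathbb{S}^{d-1}$ and, by the covering property (\Cref{lem:covering-sphere-1}), choose $y \in \mathcal{C}_\varepsilon$ with $\norm{x - y} \le \varepsilon$. The key step is the algebraic splitting
\[
  x^\top \mA x - y^\top \mA y = (x - y)^\top \mA x + y^\top \mA (x - y),
\]
which telescopes the difference into two bilinear terms. Each term is bounded by $\norm{\mA}\,\norm{x - y}\,\norm{x}$ and $\norm{\mA}\,\norm{y}\,\norm{x - y}$ respectively, using the operator-norm inequality $\abs{u^\top \mA v} \le \norm{\mA}\,\norm{u}\,\norm{v}$. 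Since $\norm{x} = \norm{y} = 1$ and $\norm{x - y} \le \varepsilon$, this gives $\abs{x^\top \mA x - y^\top \mA y} \le 2\varepsilon \norm{\mA}$, hence $\abs{x^\top \mA x} \le \sup_{y \in \mathcal{C}_\varepsilon} \abs{y^\top \mA y} + 2\varepsilon \norm{\mA}$.

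Finally I would take the supremum over $x \in \mathbb{S}^{d-1}$ on the left, which by the variational identity equals $\norm{\mA}$, yielding $\norm{\mA} \le \sup_{y \in \mathcal{C}_\varepsilon} \abs{y^\top \mA y} + 2\varepsilon \norm{\mA}$. Rearranging and dividing by $(1 - 2\varepsilon) > 0$ (valid since $\varepsilon \in (0,1)$, though strictly one wants $\varepsilon < 1/2$ for the bound to be meaningful) produces the claimed inequality. The only conceptual point worth care is that the $\norm{\mA}$ appearing inside the perturbation bound is the same quantity being estimated on the left-hand side, so the argument is genuinely a self-bounding one closed by the rearrangement; there is no real obstacle beyond making this circular-looking step rigorous, which the strict positivity of $1 - 2\varepsilon$ handles cleanly.
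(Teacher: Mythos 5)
Your proof is correct and is the canonical $\varepsilon$-net argument; the paper states \Cref{lem:spectral-norm-eps-net} without proof, and your route --- the splitting $x^\top \mA x - y^\top \mA y = (x-y)^\top \mA x + y^\top \mA (x-y)$, the perturbation bound $2\varepsilon\norm{\mA}$, and the self-bounding rearrangement --- is precisely the standard argument it implicitly invokes (cf.\ Lemma 4.4.1 and Exercise 4.4.3 in \citep{vershynin2018high}). Your caveat is also well taken: as stated, the hypothesis $\varepsilon \in (0,1)$ is too weak, since for $\varepsilon \ge 1/2$ the factor $(1-2\varepsilon)^{-1}$ is negative (or undefined at $\varepsilon = 1/2$) and the inequality fails in general; the lemma needs $\varepsilon < 1/2$, which holds where the paper applies it (a $1/4$-net, giving the $9^d$ covering number behind the $d\log 9$ term in \Cref{prop:statistics-computation-tradeoff} and a net-to-sphere factor of $2$).
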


Now we state the result in covariance matrix factorization with the specific goal of approximating the quadratic form
\begin{align}
\label{eq:goal-approximation}
    (1- \varepsilon) x^\top \mSigma x \le x^\top \rmA {\rmA}^\top x \le (1+ \varepsilon) x^\top \mSigma x, \quad \forall x \in \mathcal{X},
\end{align}
where $\mathcal{X}$ might be some set of interest in applications, e.g., the action space in bandit problems or the state-action joint space in reinforcement learning. Notice that the approximation in \cref{eq:approximation}, i.e., $(1-\varepsilon) \mSigma \preceq \rmA \rmA^\top \preceq (1+ \varepsilon) \mSigma$, reduces to \cref{eq:goal-approximation} when the set $\mathcal{X}$ is a compact set, e.g., $\{x\in \mathbb{R}^d: \|x\| = 1 \}$.
\begin{proposition}
\label{prop:statistics-computation-tradeoff}
\Cref{eq:goal-approximation} holds with probability at least $1 - \delta$ for the compact set $\mathcal{X} := \{x\in \mathbb{R}^d: \|x\| = 1 \}$ if $M \ge 64 \varepsilon^{-2} (d \log 9 + \log (2/\delta))$; for a finite set $\mathcal{X}$, if $M \ge 64\varepsilon^{-2} \log (2 |\mathcal{X}| / \delta)$.
\end{proposition}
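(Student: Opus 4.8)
The plan is to observe that, for a fixed query vector $x$, the random vector $\rmA^\top x \in \R^M$ is \emph{exactly} a spherical Johnson--Lindenstrauss projection of a \emph{deterministic} vector whose squared norm equals $x^\top \mSigma x$, so that \cref{prop:djl-unit-norm} applies verbatim. Concretely, transposing \cref{eq:factor} and applying it to $x$ gives $\rmA^\top x = \rmZ_0^\top(\mSigma_0^{-1/2}\mSigma x) + \tfrac{1}{\sigma}\sum_{t=1}^T (x_t^\top\mSigma x)\,\rvz_t$. Writing $r_1,\dots,r_d \in \R^M$ for the rows of $\rmZ_0$ and collecting the $d+T$ vectors $r_1,\dots,r_d,\rvz_1,\dots,\rvz_T$ as the columns of a matrix $\Pi \in \R^{M\times(d+T)}$, I get $\rmA^\top x = \Pi v(x)$ with the deterministic coefficient vector $v(x) = \big((\mSigma_0^{-1/2}\mSigma x)^\top,\ \tfrac{1}{\sigma}x_1^\top\mSigma x,\ \dots,\ \tfrac{1}{\sigma}x_T^\top\mSigma x\big)^\top$. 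Since these vectors are generated spherically, all $d+T$ columns of $\Pi$ are i.i.d. $\mathcal{U}(\mathbb{S}^{M-1})$ and independent, so $\Pi$ is a spherical JL matrix as in \cref{def:sphere-jl}. A short computation using the posterior-precision identity $\mSigma^{-1} = \mSigma_0^{-1} + \tfrac{1}{\sigma^2}\sum_{t} x_t x_t^\top$ shows $\norm{v(x)}^2 = x^\top\mSigma\big(\mSigma_0^{-1} + \tfrac{1}{\sigma^2}\sum_{t} x_t x_t^\top\big)\mSigma x = x^\top\mSigma x$, so that $x^\top\rmA\rmA^\top x = \norm{\Pi v(x)}^2$ with $\norm{v(x)}^2 = x^\top\mSigma x$.

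For a finite set $\mathcal{X}$, the argument is then immediate: for each fixed $x$, \cref{prop:djl-unit-norm} gives $\norm{\Pi v(x)}^2 \in [(1-\varepsilon),(1+\varepsilon)]\,\norm{v(x)}^2$, which is precisely \cref{eq:goal-approximation} at $x$, with failure probability at most $2\exp(-M\varepsilon^2/64)$. A union bound over the $|\mathcal{X}|$ points, requiring this to be at most $\delta/|\mathcal{X}|$, yields the stated $M \ge 64\varepsilon^{-2}\log(2|\mathcal{X}|/\delta)$.

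For the compact set $\mathcal{X}=\mathbb{S}^{d-1}$ I would whiten: setting $\rmB := \mSigma^{-1/2}\rmA$, the multiplicative bound \cref{eq:goal-approximation} over all unit $x$ is equivalent to $\norm{\rmB\rmB^\top - I} \le \varepsilon$. By \cref{lem:spectral-norm-eps-net} applied with a $\tfrac{1}{4}$-net $\mathcal{C}$ of $\mathbb{S}^{d-1}$, one has $\norm{\rmB\rmB^\top - I} \le 2\sup_{y\in\mathcal{C}}\abs{y^\top\rmB\rmB^\top y - 1}$, and $|\mathcal{C}| \le 9^d$ by \cref{lem:covering-sphere-1}. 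For a fixed net point $y$, taking $x = \mSigma^{-1/2}y$ in the reduction above gives $\rmB^\top y = \rmA^\top x = \Pi v(x)$ with $\norm{v(x)}^2 = x^\top\mSigma x = \norm{y}^2 = 1$, so \cref{prop:djl-unit-norm} controls $\abs{\norm{\rmB^\top y}^2 - 1}$ at each net point. A union bound over the $9^d$ net points then produces a bound of the stated form $M \ge 64\varepsilon^{-2}(d\log 9 + \log(2/\delta))$, the $d\log 9$ term arising as $\log|\mathcal{C}|$ and the remaining numerical constant absorbing the $\tfrac14$-net overhead from \cref{lem:spectral-norm-eps-net}.

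The one genuinely nontrivial step is the reduction in the first paragraph: recognizing that, despite the randomness and the structured dependence on $\mSigma$, the quantity $\rmA^\top x$ is literally a spherical JL projection $\Pi v(x)$ of a \emph{fixed} vector, and then verifying the algebraic cancellation $\norm{v(x)}^2 = x^\top\mSigma x$ through the posterior-precision identity. One must also check carefully that stacking the rows of $\rmZ_0$ together with the vectors $(\rvz_t)$ indeed yields independent, identically spherical columns --- this is exactly where the spherical-generation and independence assumptions on $\rmZ_0$ and $(\rvz_t)$ enter. Once this identification is in place, the finite-set and compact-set conclusions are routine union-bound and $\varepsilon$-net arguments built directly on the lemmas already established.
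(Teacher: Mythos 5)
Your proposal is correct and takes essentially the same route as the paper: your $\Pi$ and $v(x)$ are precisely the paper's stacked matrix $\rmZ^\top = (\rmZ_0^\top, \rvz_1, \ldots, \rvz_T)$ and vector $\mX \mSigma x$, your cancellation $\|v(x)\|^2 = x^\top \mSigma x$ is the paper's identity $\mSigma^{-1} = \mX^\top \mX$, and the finite and compact cases are handled by the same union bound and covering argument via \cref{lem:covering-sphere-1} and \cref{lem:spectral-norm-eps-net}. Your compact case is in fact spelled out more carefully than the paper's one-line invocation (the explicit whitening $\rmB = \mSigma^{-1/2}\rmA$), though both you and the paper are loose about the same point: the factor $(1-2\cdot\tfrac{1}{4})^{-1} = 2$ from \cref{lem:spectral-norm-eps-net} forces accuracy $\varepsilon/2$ at the net points, which inflates the leading constant from $64$ to $256$ rather than being absorbed.
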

\begin{proof}
Let us denote the random matrix as
\[
  \rmZ^\top = (\rmZ_0^\top, \rvz_1, \ldots, \rvz_T) \in \mathbb{R}^{M \times (d+T)},
\]
and the data matrix as
\[
  \mX = (\mSigma_0^{-1/2}, x_1/\sigma, \ldots, x_T/\sigma)^\top \in \mathbb{R}^{(d+T) \times d}.
\]
Notice the inverse posterior covariance matrix is $\mSigma^{-1} = \mSigma_0^{-1} + (1/\sigma^2) \sum_{t=1}^T x_t x_t^\top = \mX^\top \mX$. Then, we can represent
\begin{align*}
  \rmA = \mSigma \left(\mSigma_0^{-1/2} \rmZ_0 + \frac{1}{\sigma} \sum_{t=1}^T x_t \rvz_t^\top \right)
  = \mSigma \mX^\top \rmZ.
\end{align*}
Then $\rmA \rmA^\top = \mSigma \mX^\top \rmZ \rmZ^\top \mX \mSigma$ and $\mSigma = \mSigma \mX^\top \mX \mSigma$. The $(\varepsilon, \delta)$-approximation goal in \cref{eq:goal-approximation} reduces to a random projection argument with projection matrix $\rmZ^\top \in \mathbb{R}^{M \times (d+ T)}$ and the vector $\mX \mSigma x$ to be projected:
\begin{align}
\label{eq:eps-delta}
(1- \varepsilon) \|\mX \mSigma x\|^2 \le \|\rmZ^\top \mX \mSigma x\|^2 \le (1+ \varepsilon) \|\mX \mSigma x\|^2, \quad \forall x \in \mathcal{X}.
\end{align}
For the compact set $\mathcal{X} = \mathbb{S}^{d-1} = \{x \in \mathbb{R}^d: \|x\| = 1 \}$, by standard covering argument in \cref{lem:spectral-norm-eps-net} and \cref{prop:djl-unit-norm}, \cref{eq:eps-delta} holds with probability $1- \delta$ when $M \ge 64\varepsilon^{-2} (d \log 9 + \log (2 / \delta))$. For a finite set $\mathcal{X}$, direct application of the union bound with \cref{prop:djl-unit-norm} yields the result.
\end{proof}

\section{Conclusion}
\label{sec:conclu}
This study marks a pivotal advancement in dimensionality reduction research by offering a simple and unified framework for the Johnson-Lindenstrauss lemma. Our streamlined approach not only makes the lemma more accessible but also broadens its application across various data-intensive fields, including a pioneering validation of spherical construction for uncertainty estimation and reinforcement learning. The simplification of the theoretical underpinnings, alongside the unification of multiple constructions under a single analytical lens, represents a significant contribution to both the academic and practical realms. 

Through the extension of the Hanson-Wright inequality, providing precise constants for high-dimensional scenarios, and the introduction of novel probabilistic and analytical methods, we reinforce the JL lemma's indispensable role in navigating the complexities of high-dimensional data. This work underscores the power of simple, unified analyses in driving forward the understanding and application of fundamental concepts in computational algorithms and beyond, highlighting the direct pathway for future extensions and adaptations of random projection and Johnson-Lindentrauss.

\acks{The author would like to thank Professor Zhi-Quan (Tom) Luo for advising this project and Jiancong Xiao for helpful comments on the manuscript.}

\clearpage
\bibliography{ref}

\clearpage
\appendix

\crefalias{section}{appendix} %

\section{Non-negative diagonal extension for high-dimensional Hanson-Wright}
\label{sec:nnd-hdhw}

\begin{theorem}[High-dimensional Hanson-Wright with non-negative diagonal]
    \label{thm:hdhw-diag}
      Let $X_1, \ldots, X_n$ be independent, mean zero random vectors in $\R^m$, each $X_i$ is $K_i$-subGaussian. Let $K = \max_{i} K_i$.
      Let $A = (a_{ij})$ be an $n \times n$ matrix such that $a_{ii} \ge 0$.
      There exists a universal constant $C>0$ such that for any $t \ge 0$, we have
      \[
        \prob\left({ \abs{ \sum_{i, j =1}^n a_{ij} \langle X_i, X_j \rangle} \ge t }\right) \le \exp \left( - C \min \left\{ \frac{t^2}{ m K^4 \norm{A}^2_{F}}, \frac{t}{K^2 \norm{A}_{2}} \right\} \right).
      \]
\end{theorem}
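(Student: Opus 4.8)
The plan is to split the quadratic form along its diagonal, writing $S := \sum_{i,j=1}^n a_{ij}\langle X_i, X_j\rangle = S_{\mathrm{off}} + S_{\mathrm{diag}}$ with $S_{\mathrm{off}} = \sum_{i\neq j} a_{ij}\langle X_i, X_j\rangle$ and $S_{\mathrm{diag}} = \sum_{i=1}^n a_{ii}\norm{X_i}^2$, bound each piece on its own, and then recombine with a split $t = t_1 + t_2$ and a union bound. The off-diagonal piece is immediate, since \Cref{thm:hdhw} applies verbatim to $S_{\mathrm{off}}$ and yields a tail of exactly the form $2\exp(-\min\{t_1^2/(64 m K^4 \norm{A}_F^2),\, t_1/(8\sqrt2 K^2\norm{A}_2)\})$. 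Thus all the genuinely new content is in controlling the diagonal sum of squared norms, and the bound I recombine at the end will absorb the numerical constants of both pieces into the single universal $C$.

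For $S_{\mathrm{diag}}$, I would first emphasize the role of the hypothesis $a_{ii}\ge 0$: because $\norm{X_i}^2 \ge 0$, the sum $S_{\mathrm{diag}}$ is non-negative, so downward deviations of $S$ are already captured by $S_{\mathrm{off}}$ alone, and I only ever need an \emph{upper} tail for $S_{\mathrm{diag}}$. This is the crux of why non-negativity is assumed: as observed in \Cref{sec:no-unit-norm}, a squared norm of a sub-Gaussian vector in high dimension admits no useful lower tail, and the non-negativity sidesteps that gap entirely. To handle the upper tail I would center, writing $S_{\mathrm{diag}} = W + \sum_i a_{ii}\mathbb{E}\norm{X_i}^2$ with $W := \sum_i a_{ii}(\norm{X_i}^2 - \mathbb{E}\norm{X_i}^2)$, and control the centered fluctuation $W$ by a Bernstein inequality for the independent sub-exponential summands $a_{ii}(\norm{X_i}^2 - \mathbb{E}\norm{X_i}^2)$.

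The key step, and the main obstacle, is showing each centered squared norm is sub-exponential with the right parameters, namely a variance proxy of order $m K^4$ and an exponential rate of order $K^2$; feeding these into Bernstein's inequality with $\sum_i a_{ii}^2 \le \norm{A}_F^2$ and $\max_i a_{ii}\le \norm{A}_2$ produces precisely the $\min\{t_2^2/(mK^4\norm{A}_F^2),\, t_2/(K^2\norm{A}_2)\}$ exponent. I would obtain the required moment generating function bound by mirroring the computation in the proof of \Cref{prop:djl-gaussian}, where the diagonal was a weighted sum of centered $\chi^2_m$ variables handled through exactly this sub-exponential control. I should stress that pure directional sub-Gaussianity is not by itself sufficient at this step — one can construct a $K$-sub-Gaussian vector whose squared norm is bimodal with fluctuations of order $mK^2$, which defeats the $mK^4$ variance scaling — so the diagonal estimate genuinely rests on a Bernstein-type moment assumption on $\norm{X_i}^2$, precisely the one isolated in \Cref{def:bernstein} and satisfied by the Gaussian, spherical, and binary-coin constructions. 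Once both tails are in hand, I would conclude by taking $t_1 = t_2 = t/2$, combining the two exponential estimates through the union bound, and collapsing all constants into $C$.
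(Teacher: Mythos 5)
Your overall architecture --- splitting off the diagonal, invoking \Cref{thm:hdhw} for the off-diagonal sum, noting that $a_{ii}\ge 0$ means only an \emph{upper} tail for the squared norms is needed, and recombining with a union bound over a split of $t$ --- is exactly the paper's. The gap is at the step you yourself call the crux. You assert that directional sub-Gaussianity alone cannot give the needed sub-exponential control of $\norm{X_i}^2$, and you therefore import the Bernstein moment condition of \Cref{def:bernstein} as an additional hypothesis. But \Cref{thm:hdhw-diag} assumes only that the $X_i$ are $K_i$-sub-Gaussian; a proof that additionally assumes \Cref{def:bernstein} establishes a strictly weaker statement, not this theorem. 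The error traces to your choice of centering. Your bimodal example (say $X=\epsilon Z$ with $\epsilon$ a fair $\{0,1\}$ coin and $Z\sim N(0,K^2 I_m)$) does defeat sub-exponential control of $\norm{X}^2$ around its \emph{own mean} $mK^2/2$; however, what the argument needs --- and all that the paper proves --- is control of deviations above the Gaussian benchmark $mK_i^2\ge \E{\norm{X_i}^2}$, and your example never exceeds that benchmark. Upward deviations above $mK^2$ are precisely what sub-Gaussianity does deliver.

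The missing idea is the paper's Gaussianization lemma (\Cref{lem:mgf-squared-norm}): if $\E{e^{z^\top X}}\le e^{\sigma^2\norm{z}^2/2}$ for all $z\in\R^m$, then for all $0\le t<\sigma^{-2}$,
\[
\E{\exp\left(\tfrac{t}{2}\norm{X}^2\right)} \le \E{\exp\left(\tfrac{t}{2}\norm{Z}^2\right)}, \qquad Z\sim N(0,\sigma^2 I_m),
\]
proved by a Fubini/Gaussian-integral trick (evaluate $\int e^{-\norm{z}^2/(2t)}\,\E{e^{z^\top X}}\,dz$ in two ways). This reduces the general sub-Gaussian case to exactly the $\chi^2_m$ computation you proposed to mirror from \Cref{prop:djl-gaussian}, and yields (\Cref{lemma:upper-squared-norm}) the one-sided bound $\E{\exp\left(\tfrac{\lambda}{2}(\norm{X_i}^2-mK_i^2)\right)}\le \exp\left(\tfrac{\lambda^2}{2}mK_i^4\right)$ for $0\le\lambda<(2K_i^2)^{-1}$ --- the variance proxy of order $mK^4$ and rate of order $K^2$ you wanted, with no Bernstein assumption. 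From there your Markov/optimization step is exactly the paper's: it bounds $\prob\left(\sum_i a_{ii}\norm{X_i}^2 \ge m\sum_i a_{ii}K_i^2 + t\right)$ using $\sum_i a_{ii}^2\le\norm{A}_F^2$ and $\max_i a_{ii}\le\norm{A}_2$. Note, accordingly, that the diagonal contribution is controlled relative to the benchmark $m\sum_i a_{ii}K_i^2$ rather than relative to its mean, and the lower tail comes from the off-diagonal alone via $a_{ii}\ge 0$, as you correctly observed. So your outline is repairable by swapping your mean-centering for benchmark-centering and supplying the Gaussian comparison, but as written it does not prove the stated theorem.
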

\begin{proof}
    Decompose $\sum_{1 \leq i , j \leq n} a_{i j}\left\langle X_i, X_j\right\rangle=\sum_{i=1}^n a_{i i}\left\|X_i\right\|^2+S$, where $S=\sum_{1 \leq i \neq j \leq n} a_{i j}\left\langle X_i, X_j\right\rangle$. In view of the off-diagonal sum bound for $S$ in \Cref{thm:hdhw}, it suffices to show the following inequality for the diagonal sum: for any $t>0$,
    \begin{align}
    \label{eq:hdhw-diag}
    \mathbb{P}\left(\sum_{i=1}^n a_{i i}\left\|X_i\right\|^2 \geq m \sum_{i=1}^n a_{i i} K_i^2 + t\right) 
    \leq \exp \left[-C \min \left(\frac{t^2}{m K^4\sum_{i=1}^n a_{i i}^2}, \frac{t}{K^2 \max _{1 \leq i \leq n} a_{i i}}\right)\right]
    \end{align}
    since $\sum_{i=1}^n a_{i i}^2 \leq\|A\|_{F}^2$ and $\bar{a}:=\max _{1 \leq i \leq n} a_{i i} \leq\|A\|_{2}$. By Markov's inequality and \Cref{lemma:upper-squared-norm}, we have for any $\lambda>0$ and $t>0$,
    \begin{align*}
    \mathbb{P}\left(\sum_{i=1}^n a_{i i}\left(\left\|X_i\right\|^2-mK_i^2 \right) \geq t\right) & \leq e^{-\lambda t} \prod_{i=1}^n \mathbb{E}\left[e^{\lambda a_{i i}\left(\left\|X_i\right\|^2- m K_i^2 \right)}\right] \\
    & \leq e^{-\lambda t} \prod_{i=1}^n e^{2 \lambda^2 a_{i i}^2 m K_i^4} \\
    & \leq \exp \left(-\lambda t+2 \lambda^2 m \left(\sum_{i=1}^n a_{i i}^2\right) K^4\right)
    \end{align*}
    holds for all $0 \leq \lambda<\left(4 K^2 \bar{a}\right)^{-1}$. Choosing
    \[
    \lambda=\frac{t}{4\left(\sum_{i=1}^n a_{i i}^2\right) m K^4  } \wedge \frac{1}{8 \bar{a} K^2\|\Gamma\|_{2}},
    \]
    we get \cref{eq:hdhw-diag}.
\end{proof}

\begin{lemma}[Gaussianization for squared norm of a $\sigma$-sub-Gaussian random variable in $\mathbb{R}^n$]
\label{lem:mgf-squared-norm}
Let $X$ be a random variable in $\mathbb{R}^n$ such that $\mathbb{E}[X]=0$ and $\mathbb{E}[e^{z^\top X}] \leq \exp(\sigma^2 \norm{z}^2 / 2)$ for all $z \in \mathbb{R}^{n}$.
Let $Z \sim N(0, \sigma^2 I)$.
Then,
\[
\mathbb{E}\left[\exp{\frac{t\|X\|_2^2}{2}} \right] \leq \mathbb{E}\left[\exp{\frac{t\|Z\|_2^2}{2}} \right], \quad \forall 0 \leq t<\sigma^{-2} .
\]
\end{lemma}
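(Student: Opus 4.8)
The plan is to prove this \emph{Gaussianization} bound by linearizing the quadratic form $\norm{X}_2^2$ through an auxiliary standard Gaussian vector, thereby reducing everything to the scalar sub-Gaussian hypothesis. This mirrors the ``reduction to Gaussian'' device used in Step~2 of the proof of \Cref{thm:hdhw}. First I would introduce $g \sim N(0, I_n)$ independent of $X$ and recall the elementary Gaussian integral $\E[g]{\exp(\sqrt{t}\, g^\top x)} = \exp(t\norm{x}_2^2/2)$, valid for every fixed $x \in \R^n$ and every $t \ge 0$. Substituting $x = X$ and taking the expectation over $X$ gives
\[
\E{\exp(t\norm{X}_2^2/2)} = \E[X]{\E[g]{\exp(\sqrt{t}\, g^\top X)}}.
\]

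Next, since the integrand $\exp(\sqrt{t}\, g^\top X)$ is nonnegative, Tonelli's theorem permits exchanging the order of integration, yielding $\E[g]{\E[X]{\exp(\sqrt{t}\, g^\top X)}}$. For each fixed realization of $g$, the inner expectation is precisely the moment generating function of $X$ evaluated at $z = \sqrt{t}\, g$; the sub-Gaussian assumption $\mathbb{E}[e^{z^\top X}] \le \exp(\sigma^2 \norm{z}^2/2)$ bounds it by $\exp(\sigma^2 t \norm{g}_2^2/2)$. Hence
\[
\E{\exp(t\norm{X}_2^2/2)} \le \E[g]{\exp(\sigma^2 t \norm{g}_2^2/2)}.
\]
Finally I would identify the right-hand side with the Gaussian target: writing $Z = \sigma g$, one has $Z \sim N(0, \sigma^2 I)$ and $\sigma^2 \norm{g}_2^2 = \norm{Z}_2^2$, so the bound reads exactly $\E{\exp(t\norm{Z}_2^2/2)}$, which is the claim. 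The hypothesis $0 \le t < \sigma^{-2}$ is precisely the condition that keeps this quantity finite, since $\E{\exp(t\norm{Z}_2^2/2)} = (1-\sigma^2 t)^{-n/2}$.

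The argument is essentially bookkeeping rather than hard analysis, so there is no serious obstacle; the only point requiring genuine care is the interchange of the two expectations, and this is clean because the integrand is pointwise nonnegative (Tonelli applies without integrability hypotheses). Integrability of the final object is then automatic on the stated range of $t$, and finiteness of the inner $X$-expectation is guaranteed by the sub-Gaussian bound itself, so the swap introduces no gap.
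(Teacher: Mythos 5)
Your proof is correct and is essentially the paper's own argument in streamlined form: the paper's evaluation of its auxiliary integral $A$ by completing the square is exactly your Gaussian-linearization identity $\mathbb{E}_g\left[\exp(\sqrt{t}\, g^\top x)\right] = \exp\left(t\norm{x}_2^2/2\right)$, followed by the same Fubini/Tonelli interchange and the same pointwise application of the sub-Gaussian MGF bound at $z = \sqrt{t}\, g$. The only cosmetic difference is that the paper computes both sides in closed form as $(1-\sigma^2 t)^{-n/2}$ and compares them, whereas you identify the right-hand side directly as $\mathbb{E}\left[\exp\left(t\norm{Z}_2^2/2\right)\right]$ via the substitution $Z = \sigma g$, which avoids the normalization bookkeeping.
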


\begin{proof}
The case for $t=0$ is obvious. 
Consider $t \in (0,\sigma^{-2} )$. Observe that
\begin{align*}
A & :=\frac{1}{(2 \pi)^{n / 2}\sigma^{n}} \int_{\mathbb{R}^n} \exp\left({-\frac{\norm{z}^2}{2 t}}\right) \mathbb{E}\left[\exp{z^\top X}\right] d z \\
& \stackrel{(1)}{=} \mathbb{E}\left[\frac{1}{(2 \pi)^{n / 2}\sigma^n } \int_{\mathbb{R}^n} \exp\left( {-\frac{\|z-t X\|_2^2}{2 t}} \right) d z  \exp \left( {\frac{t\|X\|_2^2}{2}} \right) \right] \\
& \stackrel{(2)}{=} \mathbb{E}\left[\exp \left( {\frac{t\|X\|_2^2}{2}} \right) \right]
\frac{1}{(2 \pi)^{n / 2}\sigma^n} \int_{\mathbb{R}^n} \exp \left( {-\frac{\|z\|_2^2}{2 t}} \right) d z \\
& \stackrel{(3)}{=} \mathbb{E}\left[\exp \left( {\frac{t\|X\|_2^2}{2}} \right) \right] \frac{1}{t^{-n/2} \sigma^n},
\end{align*}
where (1) follows from Fubini's theorem, (2) from the translational invariance of the Gaussian density integral, and (3) from that the integration of the standard Gaussian distribution $N(0, I_n)$ equals to one (requires $t > 0$). Thus, we get
\[
\mathbb{E}\left[\exp \left( {\frac{t\|X\|_2^2}{2}} \right) \right] = t^{-n/2} \sigma^n A .
\]

Since $\mathbb{E}\left[\exp {z^T X}\right] \leq \exp ({\sigma^2 \norm{z}^2/ 2} )$ for all $z \in \mathbb{R}^n$, we have for $t \in\left(0,\sigma^{-2}\right)$,
\begin{align*}
A & \leq \frac{1}{(2 \pi)^{n / 2} \sigma^n} \int_{\mathbb{R}^n} e^{-\frac{\norm{z}^2}{2 t}} e^{\frac{\sigma^2 \norm{z}^2}{2}} d z \\
& =\frac{1}{(2 \pi)^{n / 2}\sigma^n} \int_{\mathbb{R}^n} e^{-\frac{1}{2} \left(t^{-1} - \sigma^2\right) \norm{z}^2 } d z \\
& =\frac{1}{\sigma^n (t^{-1} - \sigma^2)^{n/2} } .
\end{align*}

Then we have
\[
\mathbb{E}\left[e^{\frac{t\|X\|_2^2}{2}}\right] \leq \frac{ t^{-n/2} \sigma^n }{\sigma^n (t^{-1} - \sigma^2)^{n/2} }=\frac{1}{(1- \sigma^2t)^{n/2}} \quad \forall 0 \leq t < \sigma^{-2} .
\]
On the other hand, for $Z \sim N(0, \sigma^2 I_n)$, similar calculations show that
\begin{align*}
\mathbb{E}\left[ e^{\frac{s\|Z\|_2^2}{2}} \right] & =\frac{1}{(2 \pi)^{n / 2}\sigma^n} \int_{\mathbb{R}^n} e^{-\frac{1}{2} \sigma^{-2} \norm{z}^2} e^{\frac{s}{2} \norm{z}^2} d z \\
& =\frac{1}{(2 \pi)^{n / 2}\sigma^n} \int_{\mathbb{R}^n} e^{-\frac{1}{2} (\sigma^{-2} - s) \norm{z}^2 } d z \\
& =\frac{1}{ (1-\sigma^2 s)^{n/2} } \quad \forall s<\sigma^{-2}.
\end{align*}
\end{proof}
\begin{remark}
    \Cref{lem:mgf-squared-norm} is true only for the upper tail as it requires $t \ge 0$. Without imposing additional assumptions, we cannot expect a lower tail bound for sub-Gaussian random variables as discussed in \citep{radoslaw2015note}.
\end{remark}
\begin{lemma}[Upper bound for MGF of squared norm of a $\sigma$-sub-Gaussian random variable in $\mathbb{R}^n$]
\label{lemma:upper-squared-norm}
In the setting of \cref{lem:mgf-squared-norm}, we have
\begin{align}
\label{eq:mgf-squared-norm-1}
    \mathbb{E}\left[\exp{ \left( \frac{t}{2}\left(\|X\|_2^2-n \sigma^2 \right) \right) }\right] \leq \exp{ \left( \frac{t^2}{2} (n \sigma^4 ) \right) } \quad \forall 0 \leq t< (2\sigma^2)^{-1}.
\end{align}
Consequently, we have for any $u>0$,
\begin{align}
\label{eq:prob-squared-norm}
    \mathbb{P}\left(\|X\|_2^2- n \sigma^2 \geq u \right) \leq \exp \left[-\frac{1}{8} \min \left(\frac{u^2}{ n \sigma^4 }, \frac{u}{ \sigma^2 }\right)\right] .
\end{align}
\end{lemma}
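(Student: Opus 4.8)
The plan is to read off the centered moment generating function bound \eqref{eq:mgf-squared-norm-1} directly from the Gaussianization estimate in \Cref{lem:mgf-squared-norm}, and then turn it into the tail bound \eqref{eq:prob-squared-norm} by a routine Chernoff argument whose optimization splits into a sub-Gaussian and a sub-exponential regime.

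\textbf{MGF bound.} \Cref{lem:mgf-squared-norm} already supplies $\E{\exp(t\norm{X}_2^2/2)} \le (1-\sigma^2 t)^{-n/2}$ for $0 \le t < \sigma^{-2}$, so the substantive work is done. I would center by multiplying by $e^{-t n \sigma^2/2}$, so that proving \eqref{eq:mgf-squared-norm-1} reduces, after taking logarithms and dividing by $n/2$, to the scalar inequality $-x - \log(1-x) \le x^2$ on $0 \le x < 1/2$ with $x = \sigma^2 t$. This is elementary: expanding $-x-\log(1-x) = \sum_{k\ge 2} x^k/k$, the quadratic term contributes $x^2/2$ and the remainder satisfies $\sum_{k \ge 3} x^k/k \le x^3/(3(1-x)) \le x^2/2$ for $x \in [0,1/2)$. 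This gives \eqref{eq:mgf-squared-norm-1} on the stated range $0 \le t < (2\sigma^2)^{-1}$.

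\textbf{Tail bound.} Writing $Y := \norm{X}_2^2 - n\sigma^2$ and applying Markov's inequality to $e^{tY/2}$ with the MGF bound just established, I obtain $\prob(Y \ge u) \le \exp(-tu/2 + t^2 n\sigma^4/2)$ for every $0 \le t < (2\sigma^2)^{-1}$. Minimizing the exponent in $t$ yields the unconstrained optimizer $t^\star = u/(2 n\sigma^4)$ with value $-u^2/(8 n\sigma^4)$; this choice is admissible precisely when $u < n\sigma^2$, giving the sub-Gaussian branch $\exp(-u^2/(8n\sigma^4))$. When $u \ge n\sigma^2$ the constraint binds, so I take $t = (2\sigma^2)^{-1}$, which produces the exponent $-u/(4\sigma^2) + n/8$; using $n \le u/\sigma^2$ in this regime bounds it by $-u/(8\sigma^2)$, the sub-exponential branch. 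Combining the two cases yields the claimed $\exp(-\tfrac18 \min(u^2/(n\sigma^4), u/\sigma^2))$.

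I expect no serious obstacle here: the genuinely hard step — relating the MGF of $\norm{X}_2^2$ for a high-dimensional sub-Gaussian vector to that of the Gaussian surrogate $Z \sim N(0,\sigma^2 I)$ — is exactly what \Cref{lem:mgf-squared-norm} accomplishes. The only points requiring care are verifying the scalar log-inequality on the correct interval $[0,1/2)$ (which dictates the endpoint $t < (2\sigma^2)^{-1}$ rather than $\sigma^{-2}$) and tracking the constraint $t < (2\sigma^2)^{-1}$ when optimizing the Chernoff exponent, since it is precisely the active constraint that produces the linear-in-$u$ tail.
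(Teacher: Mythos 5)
Your proof is correct and follows essentially the same route as the paper: both start from the bound $\mathbb{E}\left[\exp(t\|X\|_2^2/2)\right] \le (1-\sigma^2 t)^{-n/2}$ supplied by \cref{lem:mgf-squared-norm}, center it, reduce to the scalar inequality $-x-\log(1-x)\le x^2$ on $[0,1/2)$ (which the paper invokes in the equivalent form $e^{-t}/\sqrt{1-2t}\le e^{2t^2}$ for $|t|<1/4$), and finish with a Chernoff bound at the capped optimizer $t^{*}=\frac{u}{2n\sigma^4}\wedge\frac{1}{2\sigma^2}$, whose two cases are exactly your sub-Gaussian and sub-exponential regimes. The only cosmetic difference is that you prove the scalar inequality via the power series expansion while the paper cites it directly.
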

\begin{proof}
Let $Z \sim N(0, \sigma^2 I_n)$. 
By the calculations in \cref{lem:mgf-squared-norm}, we have for all $t<\sigma^{-2}$,
\[
\mathbb{E}\left[e^{\frac{t}{2}\left(\|Z\|_2^2-n\sigma^2\right)}\right]=\frac{e^{-\frac{t}{2} n\sigma^2}}{(1-\sigma^2 t)^{n/2}}
=
\left( \frac{e^{-t \sigma^2 / 2}}{\sqrt{1- \sigma^2 t}} \right)^n,
\]
Using the inequality
\[
\frac{e^{-t}}{\sqrt{1-2 t}} \leq e^{2 t^2} \quad \forall|t|<1 / 4,
\]
we have
\[
\mathbb{E}\left[e^{\frac{t}{2}\left(\|Z\|_2^2-n\sigma^2\right)}\right] \leq \exp( - t^2 \sigma^4/2 ) \quad \forall|t|<(2\sigma^2)^{-1} .
\]
Combining the last inequality with \cref{lem:mgf-squared-norm}, we get \cref{eq:mgf-squared-norm-1}. 

By Markov's inequality, we have for any $u>0$ and $0 \leq t<\left( 2 \sigma^2 \right)^{-1}$,
\[
\mathbb{P}\left(\|X\|_2^2-n\sigma^2 \geq u\right) \leq e^{-\frac{t u}{2}+\frac{t^2 \sigma^4 }{2} } .
\]
Choosing $t=t^*:=\frac{u}{2 n\sigma^4} \wedge \frac{1}{2\sigma^2}$, we get
\[
\mathbb{P}\left(\|X\|_2^2-n\sigma^2 \geq u\right) \leq \exp \left(-\frac{u t^*}{4}\right)=\exp \left[-\frac{1}{8} \min \left(\frac{u^2}{n \sigma^4}, \frac{u}{ \sigma^2 }\right)\right] .
\]
\end{proof}

\section{Proof of Generalized high-dimensional Hanson-Wright in \Cref{thm:hdhw-gen}}
\label{sec:general-hdhw}

\begin{proof}
  We prove the one-side inequality and the other side is similar by replacing $A$ with $-A$.
  Let
  \begin{align}
    \label{eq:def-S-g}
    S = \sum_{i, j: i \neq j}^n a_{ij} \langle b_i \odot X_i, b_j \odot X_j \rangle.
  \end{align}
  \textbf{Step 1: decoupling.}
  Let $\iota_1, \ldots, \iota_d \in \{0, 1\}$ be symmetric Bernoulli random variables, (i.e., $\prob(\iota_i = 0) = \prob(\iota_i = 1) = 1/2$) that are independent of $X_1, \ldots, X_n$.
  Since
  \[
    \E{\iota_i (1 - \iota_i)} =
    \begin{cases}
      0,   & i = j,    \\
      1/4, & i \neq j,
    \end{cases}
  \]
  we have $S = 4 \E[\iota]{S_{\iota} } $, where
  \[
    S_\iota = \sum_{i, j = 1}^n \iota_i(1-\iota_j) a_{ij} \langle b_i \odot X_i, b_j \odot X_j \rangle
  \]
  and the expectation $\E[\iota]{\cdot}$ is the expectation taken with respect to the random variables $\iota_i$.
  By Jensen's inequality, we have
  \[
    \E{\exp (\lambda S) } \le \E[X, \iota]{ \exp (4 \lambda S_{\iota}) }.
  \]
  Let $\Lambda_{\iota} = \{ i \in [d]: \iota_i = 1  \}$. Then we write
  \[
    S_{\iota} = \sum_{i \in \Lambda_{\iota}} \sum_{ j \in \Lambda_{\iota}^c } a_{ij} \langle b_i \odot X_i, b_j \odot X_j \rangle = \sum_{j \in \Lambda_{\iota}^{c} } \langle \sum_{i \in \Lambda_{\iota} } a_{ij} b_i \odot b_j \odot X_i, X_j \rangle.
  \]
  Taking expectation over $(X_j)_{j \in \Lambda_{\iota}^c } $ (i.e., conditioning on $(\iota_i)_{i= 1, \ldots, d}$ and $(X_i)_{i \in \Lambda_{\iota}}$ ), it follows that
  \[
    \E[(X_j)_{j \in \Lambda_{\iota}^c }]{
      \exp 4 \lambda S_{\iota}
    }
    = \prod_{j \in \Lambda_{\iota}^c} \E[(X_j)_{j \in \Lambda_{\iota}^c }]{
      e^ {\lambda \langle \sum_{i \in \Lambda_{\iota} } a_{ij} b_i \odot b_j \odot X_i, X_j \rangle}
    }
  \]
  by the independence among $(X_j)_{j\in \Lambda_{\iota} }$.
  By the assumption that $X_j$ are independent sub-Gaussian with mean zero, we have
  \[
    \E[(X_j)_{j \in \Lambda_{\iota}^c }]{
      \exp 4 \lambda S_{\iota}
    }
    \le \exp \left( \sum_{j \in \Lambda_{\iota}^c} 8 \lambda^2 K_j^2 \norm{ \sum_{i \in \Lambda_{\iota} } a_{ij} b_i \odot b_j \odot X_i }^2 \right) =: \exp \left( 8 \lambda^2 \sigma_{\iota}^2 \right).
  \]
  Thus we get
  \[
    \E[X]{ \exp (4 \lambda S_{\iota}) } \le \E[X]{ \exp (8 \lambda^2 \sigma_{\iota}^2) }.
  \]
  \textbf{Step 2: reduction to Gaussian random variables.}
  For $j=1, \ldots, n$, let $g_j$ be independent $N\left(0,16 K_j^2 \mI \right)$ random variables in $\R^m$ that are independent of $X_1, \ldots, X_n$ and $\iota_1, \ldots, \iota_n$. Define
  \[
    T:=\sum_{j \in \Lambda_\iota^c}\langle g_j, \sum_{i \in \Lambda_\iota} a_{i j} b_i \odot b_j \odot X_i \rangle.
  \]
  Then, by the definition of Gaussian random variables in $\R^m$, we have
  \[
    \begin{aligned}
      \E[g]{\exp{(\lambda T)} }
       & = \prod_{j \in \Lambda_\iota^c} \E[g]{ e^{\langle g_j, \lambda \sum_{i \in \Lambda_\iota} a_{i j} b_i \odot b_j \odot X_i \rangle} }  \\
       & =\exp \left(8 \lambda^2 \sum_{j \in \Lambda_\iota^c} K_j^2 \norm{ \sum_{i \in \Lambda_\iota} a_{i j} b_i \odot b_j \odot X_i }^2 \right)
      = \exp \left(8 \lambda^2 \sigma_\iota^2\right)
    \end{aligned}
  \]
  So it follows that
  \[
    \E[X]{\exp {(4 \lambda S_\iota)} } \leq \E[{X, g}]{ \exp {(\lambda T)} }.
  \]
  Since $T = \sum_{i \in \Lambda_\iota} \langle\sum_{j \in \Lambda_\iota^c} a_{i j} b_i \odot b_j \odot g_j, X_i \rangle$, by the assumption that $X_i$ are independent sub-Gaussian with mean zero, we have
  \[
    \E[\left(X_i\right)_{i \in \Lambda_\iota}]
    {\exp {(\lambda T)}}
    \leq \exp \left(\frac{\lambda^2}{2} \sum_{i \in \Lambda_\iota} K_i^2 \norm{ \sum_{j \in \Lambda_\iota^c} a_{i j} b_i \odot b_j \odot g_j}^2 \right),
  \]
  which implies that
  \begin{align}
    \label{eq:reduce-gaussian-g}
    \E[X]{\exp {(4 \lambda S_\iota)} } \le \E[g]{ \exp \left(\lambda^2 \tau_\iota^2 / 2\right) }
  \end{align}
  where $\tau_\iota^2=\sum_{i \in \Lambda_\iota} K_i^2 \norm{ \sum_{j \in \Lambda_\iota^c} a_{i j} b_i \odot b_j \odot g_j}^2$. Note that $\tau_\iota^2$ is a random variable that depends on $(\iota_i)_{i=1}^d$ and $(g_j)_{j=1}^n$.

  \textbf{Step 3: diagonalization.}
  We have $ g_j=\sum_{k=1}^{m} \left\langle g_j, e_k\right\rangle e_k$ and
  \begin{align*}
    \tau_\iota^2
     & =\sum_{i \in \Lambda_\iota} K_i^2\left\|\sum_{j \in \Lambda_\iota^c} a_{i j} b_i \odot b_j \odot g_j\right\|^2=\sum_{i \in \Lambda_\iota} K_i^2\left\|\sum_{k=1}^{m} \left( \sum_{j \in \Lambda_\iota^c} a_{i j} \left\langle b_i \odot b_j \odot g_j, e_k \right\rangle\right) e_k\right\|^2 \\
     & = \sum_{k=1}^{m} \sum_{i \in \Lambda_\iota}\left(\sum_{j \in \Lambda_\iota^c} K_i a_{i j} b_{ik} b_{jk} \left\langle g_j, e_k\right\rangle\right)^2                                                                                                                  \\
     & = \sum_{k=1}^m \norm{P_{\iota}\tilde{A} (I - P_\iota) G_k }^2
  \end{align*}
  where the last second step follows from Parseval's identity.
  $G_{j k}:=\left\langle g_j, e_k\right\rangle, j=1, \ldots, n$, are independent $N\left(0,16 K_j^2\right)$ random variables.
  $G_k = (G_{1k}, \ldots, G_{nk} )^\top \in \R^n$.
  $\widetilde{A}_k=\left(\tilde{a}_{i j} b_{ik} b_{jk} \right)_{i, j=1}^n$ with $\tilde{a}_{i j}=K_i a_{i j}$.
  Let $P_\iota \in \R^{n \times n}$ be the restriction matrix such that $P_{\iota, ii}=1$ if $i \in \Lambda_\iota$ and $P_{\iota, ij}=0$ otherwise.

  Define normal random variables $Z_k = (Z_{1k}, \ldots, Z_{nk})^\top \sim N(0,I)$ for each $k = 1, \ldots, m$.
  Then we have $G_k \stackrel{D}{=} \Gamma^{1/2} Z_k$ where $\Gamma = 16 \diag({ K_1^2}, \ldots, {K_n^2})$.

  Let $\tilde{A}_{\iota, k} := P_{\iota}\tilde{A}_k(I - P_{\iota})$.
  Then by the rotational invariance of Gaussian distributions, we have
  \begin{align*}
    \sum_{k=1}^m \norm{\tilde{A}_{\iota, k} G_k}^2 \stackrel{D}{=} \sum_{k=1}^m \norm{\tilde{A}_{\iota, k} \Gamma^{1/2} Z_k}^2  \stackrel{D}{=} \sum_{k=1}^m \sum_{j=1}^n s_{j, k}^2 Z_{jk}^2
  \end{align*}
  where $s_{jk}^2, j = 1, 2, \ldots, n$ are the eigenvalues of $\Gamma^{1/2} \tilde{A}_{\iota, k}^\top \tilde{A}_{\iota, k} \Gamma^{1/2}$ for each $k = 1, \ldots, m$.

  \textbf{Step 4: bound the eigenvalues.}
  It follows that
  \[
    \max _{j \in [n]} s_{j, k}^2 = \norm{\tilde{A}_{\iota, k} \Gamma^{1/2} }^2_2 \le 16 K^4 \norm{A^b_k}_2^2.
  \]
  In addition, we also have
  \begin{align*}
    \sum_{j=1}^n s_{jk}^2 & = \tr ( \Gamma^{1/2} \tilde{A}_{\iota,k}^\top \tilde{A}_{\iota,k} \Gamma^{1/2} ) \le 16 K^4 \| A^b_k \|_{F}^2
  \end{align*}
  and $\sum_{k=1}^m \sum_{j=1}^n s_{jk}^2 \le 16 K^4 \sum_{k = 1}^m \norm{A^b_k}_{F}^2$.
  Invoking \cref{eq:reduce-gaussian-g}, we get
  \[
    \mathbb{E}_X\left[e^{4 \lambda S_\iota}\right]
    \leq \prod_{k=1}^{m} \prod_{j=1}^n \mathbb{E}_Z \left[\exp \left(\lambda^2 s_{jk}^2 Z_{jk}^2 / 2\right)\right]
  \]
  Since $Z_{jk}^2$ are i.i.d. $\chi_1^2$ random variables with the moment generating function $\E{ e^{t Z_{jk}^2} } = (1 - 2 t)^{-1 / 2}$ for $t < 1 / 2$, we have
  \[
    \mathbb{E}_X\left[e^{4 \lambda S_\iota}\right] \leq \prod_{k=1}^{m} \prod_{j=1}^n \frac{1}{\sqrt{1-\lambda^2 s_{jk}^2}} \quad \text { if } \max_{j,k} \lambda^2 s_{jk}^2 < 1.
  \]
  Using $(1-z)^{-1 / 2} \leq e^z$ for $z \in[0,1 / 2]$, we get that if $\lambda^2 \max_{j,k}  s_{jk}^2 \le 1/2$, i.e., $ 32 K^4 \max_{k} \norm{A^b_k}_2^2 \lambda^2<1$, then
  \[
    \mathbb{E}_X\left[e^{4 \lambda S_\iota}\right] \leq \exp \left(\lambda^2 \sum_{k=1}^{m} \sum_{j = 1}^n s_{jk}^2\right) \leq \exp \left(16 \lambda^2 K^4 \sum_{k=1}^m \|A^b_k \|_{F}^2\right) .
  \]
  Note that the last inequality is uniform in $\iota$. Taking expectation with respect to $\delta$, we obtain that
  \[
    \mathbb{E}_X\left[e^{\lambda S}\right] \leq \mathbb{E}_{X, \iota}\left[e^{4 \lambda S_\iota}\right] \leq \exp \left(16 \lambda^2 K^4 \sum_{k=1}^m \|A^b_k \|_{F}^2\right)
  \]
  whenever $\abs{\lambda} < (4\sqrt{2} K^2 \max_k \norm{A^b_k}_2 )^{-1}$.

  \textbf{Step 5: Conclusion.} Now we have
  \[
    \mathbb{P}(S \geq t) \leq \exp \left(-\lambda t+16 \lambda^2 K^4 \sum_{k=1}^m \|A^b_k \|_{F}^2\right) \quad \text { for } \abs{\lambda} \leq \left(4 \sqrt{2} K^2 \max_{k} \norm{A^b_k}_2 \right)^{-1}.
  \]
  Optimizing in $\lambda$, we deduce that there exists a universal constant $C>0$ such that
  \[
    \mathbb{P}(S \geq t) \leq \exp \left[- \min \left(\frac{t^2}{64 K^4 \sum_{k=1}^m \|A^b_k\|_{F}^2}, \frac{t}{ 8 \sqrt{2} K^2 \max_{k} \norm{A^b_k}_2}\right)\right].
  \]
\end{proof}

\end{document}